\documentclass{article}

\usepackage{arxiv}

\usepackage[round]{natbib}

\usepackage{graphicx}
\usepackage{amsmath,amsfonts,amssymb,amsthm}
\usepackage{float}

\theoremstyle{plain}
\newtheorem{thm}{Theorem}
\newtheorem{lem}[thm]{Lemma}

\theoremstyle{definition}

\newtheorem{rem}[thm]{Remark}
\newtheorem{assump}[thm]{Assumption}

\newenvironment{acks}[1][Acknowledgments]{\section*{#1}}{}

\title{TAP Accuracy Below the Fluctuation Scale and Universal Posterior Geometry in Spherical Linear Models}

\author{
  Jingbo Liu\\
  Department of Statistics\\
  University of Illinois Urbana-Champaign\\
  Champaign, IL 61820\\
  \texttt{jingbol@illinois.edu}\\
  \And
  Zhiyuan Yu\\
  Department of Statistics\\
  University of Illinois Urbana-Champaign\\
  Champaign, IL 61820\\
  \texttt{yu124@illinois.edu}
}
\renewcommand{\shorttitle}{TAP accuracy and universal posterior geometry}

\begin{document}
\maketitle

\begin{abstract}
We study the Bayes-optimal spherical linear model as the ambient dimension $p$ and sample size grow proportionally, under a quantitative Marchenko--Pastur spectral-regularity condition on the design. This condition is satisfied by normalized i.i.d.\ designs with standardized entries of finite fourth moment, but does not require entrywise independence or impose conditions on the singular vectors. Under this condition, we prove a quantitative all-temperature TAP approximation and characterize the posterior geometry. For the natural finite-aspect-ratio TAP functional, the normalized spherical free energy and the TAP optimum differ by $O_P(p^{-1})$. Each is within $O_P(p^{-1/2})$ of its explicit deterministic equivalent, and this fluctuation scale is sharp. Uniformly over all global TAP maximizers, the normalized squared Euclidean distance to the spherical posterior mean is $O_P(p^{-1})$. We also prove that the posterior mass outside a data-dependent band determined by the ridge estimator has sharp exponential order. More precisely, uniformly over sufficiently small band widths $\varepsilon$, the logarithm of this mass is at most $-cp\varepsilon^2+O_P(1)$. For every fixed geometrically admissible width, a spherical-cap construction gives a matching exponential-order lower bound on this mass. For every deterministic sequence of widths $\varepsilon_p\gg p^{-1/2}$, the corresponding bands capture asymptotically all posterior mass.
\end{abstract}

\noindent\textbf{2020 Mathematics Subject Classification.}
Primary 60B20; Secondary 62C10, 82B44.

\keywords{Bayesian linear regression \and spherical prior \and TAP free energy
\and posterior-mean approximation \and posterior concentration
\and random matrix theory \and design universality}

\section{Introduction}
Random linear models form a basic framework for high-dimensional regression, signal recovery, and related inverse problems. In the Bayesian formulation, the posterior mean is the Bayes estimator under squared-error loss, while the posterior covariance quantifies the remaining uncertainty. The corresponding normalizing constant determines the free energy. Obtaining sharp asymptotic characterizations of these quantities when the sample size and ambient dimension grow proportionally is a central problem in statistics, information theory, and statistical physics \cite{banerjee2026bayesian,Barbier2020,fan2021tap,reeves2019replica}.

Even when the design matrix has i.i.d.\ entries, the posterior generally exhibits dependence across the signal coordinates. For Bayesian linear models with product priors, write $\Pi(\cdot\mid X,Y)$ for the posterior and let $\mathcal Q$ be a tractable variational class. A standard variational approach considers the optimization problem
\begin{align*}
\inf_{q\in\mathcal Q}\mathrm{KL}\bigl(q\|\Pi(\cdot\mid X,Y)\bigr),
\end{align*}
where $\mathrm{KL}$ denotes Kullback-Leibler divergence \cite{Blei_2017,wainwright2008graphical}. The naive mean-field approximation restricts $\mathcal Q$ to the fully factorized family
\begin{align*}
\mathcal Q_{\mathrm{MF}}:= \left\{q(d\beta)=\prod_{i=1}^p q_i(d\beta_i)\right\}.
\end{align*}
In proportional random designs, this approximation can yield inaccurate estimates of the log-normalizing constant and posterior mean. It can also underestimate marginal uncertainty \cite{celentano2023meanfieldvariationalinferencetap,qiu2023sub}. These failures motivate variational functionals that retain the leading effect of the dependence induced by the design matrix.

The Bethe and Thouless-Anderson-Palmer (TAP) free energies provide such corrections. The TAP approach originated in the Sherrington-Kirkpatrick model \cite{mezard1987spin,thouless1977solution}. Plefka derived the TAP equations from a second-order expansion of the Gibbs potential in the interaction strength \cite{plefka1982convergence}. This expansion concerns the construction of the TAP functional, rather than the finite-size approximation error studied here. Rigorous developments include iterative constructions of solutions to these equations under the de Almeida-Thouless stability condition \cite{bolthausen2014iterative}. TAP free energies also provide variational representations of spin-glass free energies \cite{belius2022high,chen2020generalizedtapfreeenergy,fan2021tap,subag2023tap}.

Variational free energies are closely connected to message-passing algorithms. Under the usual positivity and consistency conditions, stationary points of the Bethe free energy correspond to fixed points of belief propagation \cite{wainwright2008graphical,yedidia2001bethe,yedidia2003understanding}. In dense random models, TAP functionals play an analogous variational role, while approximate message passing provides an algorithmic counterpart \cite{fan2021tap,Krzakala2014}. In compressed sensing, Krzakala et al. made this connection explicit by formulating naive mean-field and Bethe free energies and relating the stationary equations of the Bethe functional to approximate message passing \cite{Krzakala2014}. More generally, state evolution describes the iterates of approximate message passing in the large-system limit \cite{bayati2011dynamics,Krzakala2014}.

Rigorous TAP results have also been established for specific high-dimensional inference models. Fan, Mei, and Montanari analyzed $\mathbb Z_2$ synchronization. In a sufficiently strong-signal regime, the rank-one matrices formed from low-energy TAP critical points approximate the posterior mean of the signal outer product, the sign-invariant estimand in that model \cite{fan2021tap}. For Bayesian linear models with compactly supported product priors, Celentano et al. proved, under a nondegeneracy condition, the existence of a local TAP minimizer that approximates the posterior marginals. They also established local convexity and algorithmic guarantees in suitable parameter regimes. In high-noise or sufficiently low-aspect-ratio regimes, they further proved global convexity \cite{celentano2023meanfieldvariationalinferencetap}.

For the Gaussian-design spherical linear model, Qiu and Sen proved a TAP representation of the free energy in a high-noise, or high-temperature, regime \cite{qiu2023tap}. Their proof analyzes the posterior on spherical caps but does not provide a quantitative finite-size rate. Together, these works suggest that an appropriately defined TAP functional can provide the correct variational description of proportional high-dimensional Bayesian models. However, they do not establish an all-temperature global variational formula for the spherical linear model.

At low temperatures, a local TAP optimizer or an approximate-message-passing fixed point is not necessarily globally optimal. Since the TAP functional can be nonconcave, even for Gaussian and spherical priors, the possibility of other extrema attaining larger values cannot be easily ruled out \cite{yu2025proof}. 
Our earlier conference paper resolved this global-optimality problem for the spherical model with a Gaussian design by constructing a concave ridge surrogate that dominates the TAP functional \cite{yu2025proof}. It bounded the distance of global and near-global TAP maximizers from the ridge estimator in terms of their optimality gap. The same paper also gave an explicit example of TAP nonconcavity for the spherical prior.
The present paper adopts substantially improved proofs than \cite{yu2025proof} that further establish the following results:
First, we show that the gap between the spherical free energy and the TAP optimum is smaller than the sample-fluctuations of these quantities by a factor of $p^{-1/2}$;
previously a similar sharp approximation of the TAP free energy for the Ising perceptron was conjectured by 
Ding and Sun \cite[(1.12)]{ding2025capacity}.
Second, we prove a sharp convergence of the global TAP maximizers to the spherical posterior mean.
Third, we prove a concentration of the  posterior on a spherical band, clarifying the geometric picture. 

\subsection{Problem setup and notation}
For vectors, $\|\cdot\|:=\|\cdot\|_2$ denotes the Euclidean norm and $\langle\cdot,\cdot\rangle$ denotes the Euclidean inner product. For matrices, $\|\cdot\|_{\mathrm{op}}$ and $\|\cdot\|_F$ denote the operator and Frobenius norms respectively. We use $\operatorname{Tr}(\cdot)$ for the trace and $I_k$ for the $k\times k$ identity matrix. For symmetric matrices, $A\preceq B$ means that $B-A$ is positive semidefinite. We write $x_+:=\max\{x,0\}$. The symbols $c$ and $C$ denote positive constants whose values may change from line to line.

For deterministic sequences $u_p$ and $r_p>0$, we write $u_p=O(r_p)$ if $|u_p|/r_p$ is bounded for all sufficiently large $p$, and $u_p=o(r_p)$ if $u_p/r_p\to0$. For positive sequences $u_p$ and $r_p$, we write $u_p\asymp r_p$ if both $u_p=O(r_p)$ and $r_p=O(u_p)$. All probabilistic order notation below is with respect to the joint law of the planted model specified next. For random variables $U_p$, we write $U_p=O_P(r_p)$ if $U_p/r_p$ is tight. Equivalently, for every $\eta>0$, there exist finite constants $M_\eta$ and $p_\eta$ such that
\begin{align*}
\mathbb P\bigl(|U_p|>M_\eta r_p\bigr)\le\eta \qquad\text{for all }p\ge p_\eta.
\end{align*}
We write $U_p=o_P(r_p)$ if $U_p/r_p\xrightarrow{\mathbb P}0$. The symbols $\xrightarrow{\mathbb P}$ and $\xrightarrow{d}$ denote convergence in probability and convergence in distribution respectively. When auxiliary random variables are introduced, the underlying probability law is enlarged to include their explicitly specified joint distribution.

We consider a Bayes-optimal linear model with a uniform spherical prior. This prior models a rotationally invariant signal with fixed energy, $\|\beta\|^2=p$. It also provides a natural non-product setting for studying TAP approximations: the fixed-norm constraint induces dependence across the coordinates, while rotational symmetry permits explicit geometric and random-matrix analysis. Consequently, the fully factorized KL formulation above does not apply directly to the resulting posterior. The appropriate variational object is therefore a spherical TAP functional.

For each $p$, let $n:=n_p$ and set $\alpha_p:=n/p$. We observe
\begin{align*}
Y=X\beta^\star+W,
\end{align*}
where $X=X_p\in\mathbb R^{n\times p}$, $\beta^\star$ is uniformly distributed on $S^{p-1}(\sqrt p)$, and $W\sim N(0,\Delta I_n)$ with $\Delta>0$. The matrix $X$, the signal $\beta^\star$, and the noise $W$ are mutually independent.

Throughout, we work in the proportional regime and assume that, for some fixed $\alpha\in(0,\infty)$,
\begin{align}
|\alpha_p-\alpha|=O\left(p^{-1/2}\right). \label{eq:aspect_ratio_rate}
\end{align}
This mild quantitative condition is satisfied, for example, when $n=\lfloor\alpha p\rfloor$, in which case $|\alpha_p-\alpha|=O(p^{-1})$.

Let $\mu_{\mathrm{MP},\alpha_p}$ denote the Marchenko--Pastur (MP) law on $[0,\infty)$ corresponding to the aspect ratio $p/n=1/\alpha_p$. We impose the following quantitative spectral condition on the design matrix.

\begin{assump}[Quantitative Marchenko--Pastur spectral regularity]
\label{assump:subgaussian_design}
There exists a deterministic constant $C_X<\infty$ such that
\begin{align*}
\mathbb P\left(\|X\|_{\mathrm{op}}\le C_X\right)\longrightarrow1.
\end{align*}
Moreover, for every fixed $\delta>0$,
\begin{align*}
\frac1p\log\det\left(I_p+\frac1\delta X^\top X\right) &=\int\log\left(1+\frac{x}{\delta}\right) d\mu_{\mathrm{MP},\alpha_p}(x)+O_P(p^{-1}),\\
\frac{\delta}{p}\operatorname{Tr}(X^\top X+\delta I_p)^{-1} &=\int\frac{\delta}{x+\delta} d\mu_{\mathrm{MP},\alpha_p}(x)+O_P(p^{-1/2}).
\end{align*}
\end{assump}
Assumption~\ref{assump:subgaussian_design} is verified for normalized i.i.d.\ designs with finite fourth moment in Lemma~\ref{lem_sectionA_2}. Its proof in Appendix~\ref{app:tap_estimates} uses the Bai--Yin spectral-edge theorem and Najim and Yao's linear-spectral-statistic estimates.

Apart from the independence condition stated above, Assumption~\ref{assump:subgaussian_design} concerns only the singular values of $X$ and allows both random and deterministic design sequences. This assumption is stronger than weak convergence of the empirical spectral distribution. Even when combined with a bounded spectral edge, weak convergence gives only an $o_P(1)$ normalized error for the two displayed statistics, without a quantitative rate. This is insufficient for the conclusions below the fluctuation scale. The quantitative estimates displayed above are needed in the proofs.

Let $\pi$ denote the uniform measure on $S^{p-1}(\sqrt p)$ and set
\begin{align}
Z_p^S &:=\int_{S^{p-1}(\sqrt p)} \exp\left\{-\frac{1}{2\Delta}\|Y-X\beta\|^2\right\}d\pi(\beta). \label{eq:spherical_partition_definition}
\end{align}
The spherical posterior is
\begin{align}
\Pi_S(d\beta\mid X,Y) &:=\frac{1}{Z_p^S} \exp\left\{-\frac{1}{2\Delta}\|Y-X\beta\|^2\right\}d\pi(\beta). \label{eq:spherical_posterior_definition}
\end{align}
The normalized spherical free energy is
\begin{align}
F_p^S:=\frac1p\log Z_p^S. \label{eq:spherical_free_energy_definition}
\end{align}
We study this free energy, the posterior mean, and the geometry of the posterior measure.

For $\|a\|<\sqrt p$, define the spherical TAP functional
\begin{align}
f_{\mathrm{TAP}}(a):={}& -\frac{1}{2\Delta p}\|Y-Xa\|^2 -\frac{\alpha_p}{2}\log\left(1+\frac{1-\|a\|^2/p}{\alpha_p\Delta}\right) +\frac12\log\left(1-\frac{\|a\|^2}{p}\right), \label{eq:tap_functional_definition}
\end{align}
and set $f_{\mathrm{TAP}}(a):=-\infty$ for $\|a\|\ge\sqrt p$. We further define
\begin{align}
F_{\mathrm{TAP}}:=\max_{\|a\|\le\sqrt p}f_{\mathrm{TAP}}(a), \qquad \mathcal M_p:=\operatorname*{arg\,max}_{\|a\|\le\sqrt p}f_{\mathrm{TAP}}(a). \label{eq:tap_optimum_definition}
\end{align}
For every fixed realization of $(X,Y)$, all terms other than $\frac12\log(1-\|a\|^2/p)$ remain bounded as $\|a\|\uparrow\sqrt p$, whereas this logarithmic term tends to $-\infty$. Hence, $f_{\mathrm{TAP}}(a)\to-\infty$ as $\|a\|\uparrow\sqrt p$. It follows that $\mathcal M_p$ is nonempty and compact, and the measurable maximum theorem ensures that the suprema over $\mathcal M_p$ used below are measurable.

\subsection{Contributions and main results}
For the Gaussian-design spherical model, Qiu and Sen proved $F_p^S-F_{\mathrm{TAP}}\xrightarrow{\mathbb P}0$ in a high-noise regime and left its extension to every fixed $\Delta>0$ as an open problem \cite{qiu2023tap}. Our earlier conference paper extended this convergence to every fixed $\Delta>0$ and localized global and near-global TAP maximizers around the ridge estimator \cite{yu2025proof}. The present work strengthens these results in four directions.
\begin{itemize}
\item \emph{TAP accuracy below the fluctuation scale.} For the finite-aspect-ratio TAP functional, we prove $|F_p^S-F_{\mathrm{TAP}}|=O_P(p^{-1})$, whereas the deviations of $F_p^S$ and $F_{\mathrm{TAP}}$ from their common deterministic equivalent occur on the sharp scale $p^{-1/2}$. Equivalently, the unnormalized TAP approximation error is $O_P(1)$, compared with individual sample fluctuations on the scale $\sqrt p$. This gives a samplewise comparison for the global spherical TAP optimum below the scale of its individual fluctuations. Ding and Sun conjectured an $O(1)$-accurate samplewise TAP description for the Ising perceptron \cite[(1.12)]{ding2025capacity}, while prior rigorous TAP representations generally identify the leading extensive term, with error $o_P(p)$ rather than $O_P(1)$ \cite{belius2022high,chen2020generalizedtapfreeenergy,qiu2023tap,yu2025proof}. Here, we establish an analogous $O(1)$-accurate TAP description for the spherical regression problem,
leveraging
a common sample-dependent ridge surrogate that captures the leading fluctuations.

\item \emph{Uniform identification of global TAP maximizers with the posterior mean.} We prove that the normalized squared Euclidean distance between the exact spherical posterior mean and any global TAP maximizer is $O_P(p^{-1})$, uniformly over all such maximizers. This strengthens the earlier ridge localization result by quantitatively identifying the global TAP maximizers with the Bayesian posterior mean itself.

\item \emph{Sharp exponential-order posterior geometry.} Uniformly over sufficiently small band widths $\varepsilon$, the log posterior mass outside a data-dependent ridge band is at most $-cp\varepsilon^2+O_P(1)$. For every fixed geometrically admissible width, a spherical-cap construction gives a matching exponential-order lower bound on this posterior mass. Thus the order-$p$ exponential speed is sharp for every fixed admissible width, while every sequence $\varepsilon_p\gg p^{-1/2}$ yields posterior contraction.

\item \emph{Spectral universality beyond i.i.d.\ designs.} The three results above hold for every random or deterministic design sequence satisfying Assumption~\ref{assump:subgaussian_design}. In particular, the proofs require neither entrywise independence nor conditions on the singular vectors. This extension does not rely on a new invariance principle. Instead, the reduction to the common Gaussian-prior ridge surrogate uses a high-probability spectral-edge bound and two quantitative Marchenko--Pastur linear-spectral-statistic estimates. Normalized i.i.d.\ designs with standardized entries of finite fourth moment provide a concrete example, as follows from established sample-covariance results \cite{Bai2010,najim2016gaussian}. The key point is that these spectral inputs suffice for the quantitative TAP and posterior-geometric conclusions obtained here. Weak convergence of the empirical spectrum alone is not sufficient for the stated rates. Related universality results for message-passing algorithms and regularized regression are obtained by different methods \cite{bayati2015universality,han2023universality}.
\end{itemize}

The free-energy proof combines radial conditioning and ridge domination. For the same observed data, exact radial conditioning and local density estimates show that the spherical and Gaussian-prior free energies differ by $O_P(p^{-1})$. Building on our earlier conference paper \cite{yu2025proof}, we bound the TAP functional by a strictly concave quadratic ridge functional. The resulting nonnegative gap vanishes quadratically at a deterministic reference value of the normalized squared norm. Consequently, $O_P(p^{-1/2})$ deviations of this quantity for the ridge estimator produce an $O_P(p^{-1})$ error. Gaussian integration and the log-determinant estimate in Assumption~\ref{assump:subgaussian_design} show that the Gaussian free energy also differs from the same ridge optimum by $O_P(p^{-1})$. The matching fluctuation lower bounds follow from an anti-concentration estimate for the Gaussian quadratic polynomial appearing in the ridge optimum and the $O_P(p^{-1})$ comparisons with $R_p$.

The posterior-mean approximation and ridge-band results do not rely on the TAP free-energy convergence theorem. For the posterior-mean result, Gaussian conditioning first shows that the spherical and Gaussian posterior means are within $O_P(1)$ of each other. The entropy-gap comparison and the quadratic curvature of the ridge surrogate then place every global TAP maximizer within $O_P(1)$ of the Gaussian posterior mean. For the ridge-band result, the conditioned-Gaussian representation and a linear exponential tilt yield the uniform upper bound, while a spherical-cap construction gives the matching exponential-order lower bound.

Here and throughout, ``all-temperature'' means that the results hold for every fixed $\Delta>0$. Constants may depend on $\Delta$, and no uniformity as $\Delta\downarrow0$ is claimed. Retaining $\alpha_p$ in \eqref{eq:tap_functional_definition} is essential for the below-fluctuation-scale comparison in Theorem~\ref{thm_sectionA_1}. Replacing $\alpha_p$ by $\alpha$ changes the functional by at most $C|\alpha_p-\alpha|$, uniformly over the domain on which it is finite. This bound is only of order $p^{-1/2}$ under \eqref{eq:aspect_ratio_rate}.

To state the deterministic equivalent, for $\gamma>0$ define
\begin{align*}
E(\gamma,\Delta) &:=\frac{1-\gamma-\gamma\Delta+ \sqrt{(\gamma\Delta+\gamma-1)^2+4\gamma\Delta}}{2},\\*
\phi(\gamma,\Delta) &:=-\frac{\gamma}{2}+\frac12\bigl(1-E(\gamma,\Delta)\bigr) -\frac{\gamma}{2}\log\left(1+\frac{E(\gamma,\Delta)}{\gamma\Delta}\right) +\frac12\log E(\gamma,\Delta).
\end{align*}
These quantities can be shown to equal the fixed point of the replica symmetric free energy and the corresponding free energy \cite[(45)]{yu2025proof}.
We abbreviate
\begin{equation}
E_p:=E(\alpha_p,\Delta),\qquad E_\Delta:=E(\alpha,\Delta). \label{eq:Ep_definition}
\end{equation}

The first main result quantifies the spherical TAP free-energy formula and shows that the $p^{-1/2}$ fluctuation scale around the finite-aspect-ratio deterministic equivalent is sharp.

\begin{thm}[TAP accuracy below the fluctuation scale]
\label{thm_sectionA_1}
Suppose that Assumption~\ref{assump:subgaussian_design} holds, $\alpha_p=n/p$ satisfies \eqref{eq:aspect_ratio_rate}, and $\Delta>0$ is fixed. Then
\begin{align}
\left|F_p^S-F_{\mathrm{TAP}}\right|&=O_P(p^{-1}), \label{eq:main_tap_accuracy}\\
F_p^S&=\phi(\alpha_p,\Delta)+O_P(p^{-1/2}), \label{eq:main_spherical_centering}\\
F_{\mathrm{TAP}}&=\phi(\alpha_p,\Delta)+O_P(p^{-1/2}). \label{eq:main_tap_centering}
\end{align}
Moreover, the $p^{-1/2}$ rate is sharp: there exist constants $c,\eta>0$, depending only on $(\alpha,\Delta,C_X)$, such that
\begin{align}
\liminf_{p\to\infty} \mathbb P\left( \left|F_p^S-\phi(\alpha_p,\Delta)\right| \ge \frac{c}{\sqrt p} \right)&\ge\eta, \label{eq:main_spherical_sharpness}\\
\liminf_{p\to\infty} \mathbb P\left( \left|F_{\mathrm{TAP}}-\phi(\alpha_p,\Delta)\right| \ge \frac{c}{\sqrt p} \right)&\ge\eta. \label{eq:main_tap_sharpness}
\end{align}
Consequently, neither centered quantity is $o_P(p^{-1/2})$. The bounds \eqref{eq:main_spherical_centering} and \eqref{eq:main_tap_centering} remain valid with $\phi(\alpha_p,\Delta)$ replaced by $\phi(\alpha,\Delta)$. No matching lower bound is asserted for the direct comparison $|F_p^S-F_{\mathrm{TAP}}|$.
\end{thm}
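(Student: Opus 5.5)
The proof runs through a common sample-dependent ridge surrogate. For a variance parameter $v>0$ tuned to the deterministic reference value, define
\begin{align*}
R_p:=\max_{a\in\mathbb R^p}\Bigl\{-\frac{1}{2\Delta p}\|Y-Xa\|^2-\frac{\kappa_p}{2p}\|a\|^2\Bigr\}+c_p,
\end{align*}
where $\kappa_p$ and the constant $c_p$ are chosen so that the quadratic penalty is tangent to the entropy-plus-Onsager part of $f_{\mathrm{TAP}}$ at $\|a\|^2/p=1-E_p$. The argument then proves four comparisons, each at the $O_P(p^{-1})$ scale where possible: (i) $F_p^S$ versus the Gaussian-prior free energy $F_p^G$; (ii) $F_p^G$ versus $R_p$; (iii) $F_{\mathrm{TAP}}$ versus $R_p$; and (iv) $R_p$ versus $\phi(\alpha_p,\Delta)$, which holds only at scale $p^{-1/2}$ and is sharp there. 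Combining (i)–(iii) gives \eqref{eq:main_tap_accuracy}. Combining all four gives \eqref{eq:main_spherical_centering}–\eqref{eq:main_tap_centering}. Replacing $\phi(\alpha_p,\Delta)$ by $\phi(\alpha,\Delta)$ costs $O(|\alpha_p-\alpha|)=O(p^{-1/2})$, since $\phi$ is smooth in $\gamma$.

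\textbf{Step (i).} Write the Gaussian prior $N(0,I_p)$ in polar form, as radius times a uniform direction. The Gaussian partition function becomes an integral over $r=\|\beta\|^2/p$ of the spherical partition function at radius $\sqrt{rp}$, weighted by the chi-square density. Rescaling shows that the log spherical partition function at radius $\sqrt{rp}$ is a smooth function of $r$ with derivative $O_P(p)$. Laplace's method in the single variable $r$ then gives $\log Z^G=\log Z^S(r^\star)+O_P(1)$. To obtain the correct centering, I would use a Gaussian prior with variance tuned so that the posterior radius concentrates at $1$, together with local density estimates for $\|\beta\|^2/p$ under the tilted measure.

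\textbf{Step (ii).} Gaussian integration is exact:
\[
\log Z^G=-\tfrac12\log\det(I+X^\top X/\Delta)+\text{(ridge quadratic form)}.
\]
The log-determinant term is handled by the first estimate of Assumption~\ref{assump:subgaussian_design} with error $O_P(p^{-1})$, so $F_p^G=R_p+O_P(p^{-1})$ for the appropriate constant $c_p$.

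\textbf{Step (iii).} Following the earlier conference paper, write $f_{\mathrm{TAP}}(a)=\rho(a)-g(\|a\|^2/p)$, where $\rho$ is the ridge functional and $g\ge0$ vanishes quadratically at $1-E_p$. This gives $F_{\mathrm{TAP}}\le R_p$ immediately. For the reverse inequality, evaluate $f_{\mathrm{TAP}}$ at the ridge maximizer $\hat a$. This gives $F_{\mathrm{TAP}}\ge R_p-C(\|\hat a\|^2/p-(1-E_p))^2$. The normalized trace identity in Assumption~\ref{assump:subgaussian_design}, combined with a concentration argument for the quadratic form $Y^\top(\cdots)Y$ given $X$, shows that $\|\hat a\|^2/p-(1-E_p)=O_P(p^{-1/2})$. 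The error is therefore $O_P(p^{-1})$. I expect this step to be the main obstacle. Verifying the tangency and the nonnegativity of $g$ uniformly over $\|a\|<\sqrt p$, for every $\Delta$, needs the explicit fixed-point identity defining $E$. The radial Laplace argument in Step (i) is the other delicate point: it needs the $O(1)$ constant exactly, not just the $o(p)$ leading term.

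\textbf{Step (iv) and sharpness.} $R_p$ equals a deterministic quantity plus $\frac{1}{2\Delta p}Y^\top M Y$ with $M=X(X^\top X+\kappa\Delta I)^{-1}X^\top-I$. Conditional on $X$ and on the signal radius, this is a quadratic polynomial in the Gaussian noise $W$ and the uniform direction of $\beta^\star$. Its mean matches $\phi(\alpha_p,\Delta)$ up to $O_P(p^{-1/2})$ by the trace estimate, and its variance is of order $p^{-1}$. This yields the upper bound in \eqref{eq:main_spherical_centering}. For the lower bounds \eqref{eq:main_spherical_sharpness}–\eqref{eq:main_tap_sharpness}, I would show that the conditional variance of $\frac1p Y^\top MY$ is at least $c/p$: on $\|X\|_{\mathrm{op}}\le C_X$ the matrix $M$ has $\operatorname{Tr}(M^2)\ge cn$. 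Carbery–Wright anti-concentration for degree-two Gaussian polynomials then gives $\mathbb P(|R_p-\phi|\ge c/\sqrt p)\ge\eta$. Since $F_p^S$ and $F_{\mathrm{TAP}}$ are within $O_P(p^{-1})$ of $R_p$, both inherit this lower bound.
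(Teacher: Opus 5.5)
Your architecture matches the paper's: a radial spherical-to-Gaussian comparison, exact Gaussian integration, a ridge surrogate dominating $f_{\mathrm{TAP}}$ through a nonnegative scalar gap vanishing quadratically at $1-E_p$, and Carbery--Wright on the ridge quadratic form. The gap is that you fix the surrogate's parameters in three separate places and never check that a single surrogate serves every step. Tangency in the definition of $R_p$ already determines both parameters. By \eqref{eq:E_fixed_point}, the Onsager-plus-entropy term has slope exactly $-\tfrac12$ in $q=\|a\|^2/p$ at $q=1-E_p$, so $\kappa_p=1$. Matching values then forces $c_p=-\tfrac12L(\alpha_p,\Delta)$. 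Step (ii), however, yields $F_p^G-R_p=-\tfrac12\int\log(1+x/\Delta)\,d\mu_{\mathrm{MP},\alpha_p}(x)-c_p+O_P(p^{-1})$, and the phrase ``for the appropriate constant $c_p$'' silently chooses $c_p$ a second time. The chain (i)--(iii) closes only because of the closed-form identity $\int\log(1+x/\Delta)\,d\mu_{\mathrm{MP},\gamma}(x)=L(\gamma,\Delta)$. This identity is not a consequence of Assumption~\ref{assump:subgaussian_design}, and your plan never states it. Without it, (ii) and (iii) compare against surrogates whose constants could differ at order one, and \eqref{eq:main_tap_accuracy} does not follow. The paper supplies the identity by differentiating both sides in $\Delta$. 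It uses the Marchenko--Pastur Stieltjes-transform quadratic, whose positive root is $E/\Delta$, to show both derivatives equal $(E-1)/\Delta$, and then matches the limits as $\Delta\to\infty$. The same consistency requirement governs your ``tuned variance'' in Step (i). The Gaussian prior must have variance exactly $1/\kappa_p=1$ for (i)--(ii) and (iii) to involve the same ridge functional. No tuning is needed: under $N(0,I_p)$ one has $\mathbb E[\|a_\Delta\|^2\mid X]=\operatorname{Tr}(I_p-\Sigma_G)$, and with a conditional variance bound of order $p$ this makes the Gaussian posterior's mean squared radius $p+O_P(\sqrt p)$ automatically.

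There are also smaller issues. In Step (i), Laplace's method with only a first-derivative bound on $r\mapsto\log Z^S(rp)$ does not give $O(1)$ accuracy. What you need, and all you need, is the exact identity $p(F_p^S-F_p^G)=\log\rho_{X,Y}(p)-\log f_{\chi^2_p}(p)$, where $\rho_{X,Y}$ is the density of $\|Z\|^2$ under $N(a_\Delta,\Sigma_G)$. This must be combined with a two-sided bound $\rho_{X,Y}(p)\asymp p^{-1/2}$, holding uniformly when the spectrum of $\Sigma_G$ lies in $[c_-,1]$ and the mean radius is within $O(\sqrt p)$ of $p$. That local estimate is the real technical content of Step (i). The paper obtains it from the characteristic-function bound $|\varphi(t)|\le(1+4c_-^2t^2)^{-p/4}$, Fourier inversion, and an exponential tilt. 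In the sharpness step, Carbery--Wright requires Gaussian input, and the direction of $\beta^\star$ is not Gaussian. You should condition on $(X,\beta^\star)$ and use only the noise. The noise's quadratic part alone already has conditional variance of order $\operatorname{Tr}(M^2)\ge cn$, which is the bound you compute. Finally, $\mathbb E[Y^\top(\Delta I_n+XX^\top)^{-1}Y\mid X]=n$ holds exactly, so no trace estimate is needed for the mean of $R_p$.
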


The second main result connects the TAP variational problem with the posterior mean.

\begin{thm}[Uniform approximation of the posterior mean by global TAP maximizers]
\label{thm_sectionA_2}
Under the assumptions of Theorem~\ref{thm_sectionA_1}, let
\begin{align*}
m_S:=\int\beta\,\Pi_S(d\beta\mid X,Y)
\end{align*}
be the spherical posterior mean. Then
\begin{align*}
\sup_{a^*\in\mathcal M_p}\frac1p\|a^*-m_S\|_2^2 =O_P(p^{-1}).
\end{align*}
In particular, the same estimate holds for every measurable choice $a^*\in\mathcal M_p$.
\end{thm}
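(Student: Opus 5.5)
The plan is to compare both $m_S$ and every $a^*\in\mathcal M_p$ with a common intermediate object: the posterior mean under a Gaussian prior. Let $m_G$ be the posterior mean of $\beta$ under the prior $N(0,\sigma^2 I_p)$, for a data-independent variance $\sigma^2$ chosen to match the spherical radius. Explicitly,
\begin{align*}
m_G=\bigl(X^\top X+\tfrac{\Delta}{\sigma^2}I_p\bigr)^{-1}X^\top Y,
\end{align*}
which is a ridge estimator. By the triangle inequality it suffices to prove the two estimates $\|m_S-m_G\|=O_P(1)$ and $\sup_{a^*\in\mathcal M_p}\|a^*-m_G\|=O_P(1)$. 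Squaring and dividing by $p$ then gives the claimed $O_P(p^{-1})$ rate.

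\emph{Step 1 (spherical versus Gaussian mean).} Write the Gaussian-prior posterior as a mixture over the radius $r=\|\beta\|$. Conditionally on $r$, the law is the spherical posterior at radius $r$, and by rotational invariance its mean $m(r)$ depends smoothly on $r$. Under the Gaussian prior the radial posterior is a one-dimensional density of the form $\exp\{p\,g(r^2/p)\}$, up to a Jacobian. The exponent $g$ is built from $\log\det$ and trace statistics of $X^\top X$, and Assumption~\ref{assump:subgaussian_design} controls these. I would choose $\sigma^2$ (possibly data-dependent through these statistics) so that the radial posterior is centered at $r^2/p=1+O_P(p^{-1/2})$ with width $O(p^{-1/2})$. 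Then $m_G=\mathbb E[m(r)]$. A derivative bound $\|\partial_r m(r)\|=O_P(1)$ near $r=\sqrt p$ gives $\|m_G-m(\sqrt p)\|\le C\,\mathbb E|r-\sqrt p|=O_P(1)$, since $|r-\sqrt p|=O_P(1)$ when $r^2/p$ fluctuates by $O(p^{-1/2})$. Here $m(\sqrt p)=m_S$.

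The derivative bound is the delicate part. The spherical posterior at radius $r$ is a Bingham–von Mises–Fisher law, so $\partial_r m(r)$ is expressed through posterior covariances. I would control it through the operator norm of the posterior covariance, which is $O(1)$ by the Gaussian-conditioning representation together with $\|X\|_{\mathrm{op}}\le C_X$.

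\emph{Step 2 (TAP maximizers versus the ridge estimator).} Use the concave ridge surrogate from the free-energy argument: a strictly concave quadratic $R(a)\ge f_{\mathrm{TAP}}(a)$ whose maximizer is a ridge estimator $\hat a$. With $\lambda$ chosen as in Theorem~\ref{thm_sectionA_1}'s proof, $\hat a$ coincides with $m_G$ up to $O_P(1)$. Strong concavity gives
\begin{align*}
R(\hat a)-R(a)\ge \frac{c}{p}\|a-\hat a\|^2.
\end{align*}
For any $a^*\in\mathcal M_p$ this yields the chain
\begin{align*}
\frac{c}{p}\|a^*-\hat a\|^2\le R(\hat a)-R(a^*)\le R(\hat a)-f_{\mathrm{TAP}}(a^*)\le R(\hat a)-f_{\mathrm{TAP}}(\hat a).
\end{align*}
The last quantity is the surrogate gap at $\hat a$. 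That gap vanishes quadratically in $\|\hat a\|^2/p-E_\star$, where $E_\star$ is the deterministic reference value. Since $\|\hat a\|^2/p-E_\star=O_P(p^{-1/2})$ by the trace estimate in Assumption~\ref{assump:subgaussian_design}, the gap is $O_P(p^{-1})$. Hence $\sup_{a^*}\|a^*-\hat a\|^2=O_P(1)$, uniformly over maximizers, because the bound does not depend on which $a^*$ is chosen.

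\emph{Main obstacle.} I expect Step 1 to be the hard part: showing that the spherical mean is $O_P(1)$-close to the ridge mean, rather than only $o(\sqrt p)$-close. The concrete difficulty is aligning the radial centering with the same $\lambda$ used in Step 2, so that the same ridge estimator serves as the common comparison point for both $m_S$ and the TAP maximizers. My first attempt would be the exact representation of the spherical posterior as a Gaussian posterior conditioned on $\|\beta\|^2=p$. From it, $m_S-m_G$ is a covariance term, bounded by $\|\mathrm{Cov}\|_{\mathrm{op}}^{1/2}$ times the $O(1)$ standard deviation of the conditioning statistic $\|\beta\|$.
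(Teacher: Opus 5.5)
Your triangle-inequality architecture and your Step~2 coincide with the paper's. With $\hat a=a_\Delta$, the maximizer of $Q_p$, the chain through $f_{\mathrm{TAP}}(a^*)\ge f_{\mathrm{TAP}}(a_\Delta)$ and $h_p(q_\Delta)\le C(q_\Delta-q_{0,p})^2$, with reference value $q_{0,p}=1-E_p$, gives $\sup_{a^*\in\mathcal M_p}\|a^*-a_\Delta\|^2=O_P(1)$.

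\textbf{The gap is Step~1, which carries the real content.} Your mixture argument needs $\|\partial_r m(r)\|=O(1)$ uniformly over a window of radii whose width grows with $p$. The width must be at least of order $\sqrt{\log p}$, so that the crude bound $\|m(r)\|\le r$ can absorb the tails of $\|Z\|$. The only justification you give is an $O(1)$ operator-norm bound on the posterior covariance, and that is not enough. Writing $\beta=r\theta$ and differentiating the density on $S^{p-1}(r)$ gives
\[
\partial_r m(r)=\frac{m(r)}{r}+\frac1r\operatorname{Cov}_r\bigl(\beta,T(\beta)\bigr),\qquad T(\beta):=\frac1\Delta\bigl(\langle X^\top Y,\beta\rangle-\beta^\top X^\top X\beta\bigr).
\]
Since $r\approx\sqrt p$, you need both $\|\operatorname{Cov}_r(\beta)\|_{\mathrm{op}}=O(1)$ and $\operatorname{Var}_r(T)=O(p)$ under the law conditioned on the sphere. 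Neither follows from $\Sigma_G\preceq I_p$ and $\|X\|_{\mathrm{op}}\le C_X$. Comparing moments under conditioning on the null event $\{\|Z\|^2=r^2\}$ with those of $N(a_\Delta,\Sigma_G)$ essentially requires a lower bound $\rho_Z(r^2)\gtrsim p^{-1/2}$ on the radial density, i.e.\ a local limit theorem. Because $T$ is itself a high-dimensional quadratic statistic, its conditional variance would further need a joint local estimate for $(T,\|Z\|^2)$. Curvature (Bakry--\'Emery) arguments on the sphere are not available uniformly in $\Delta$. Your fallback, that $m_S-m_G$ is a covariance term bounded by $\|\mathrm{Cov}\|_{\mathrm{op}}^{1/2}$ times the standard deviation of $\|\beta\|$, is the linear-regression heuristic for the nonlinear map $t\mapsto\mathbb E[Z\mid\|Z\|^2=t]$. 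That this map moves by only $O(1+|t-\mu_p|/\sqrt p)$ is exactly what must be proved.

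\textbf{The missing ingredient is Lemma~\ref{lem_gaussian_typical_sphere}.} It bounds the conditional mean directly and never differentiates in $r$. Work in the eigenbasis of $\Sigma$, with eigenvalues $c_i\ge c_-$.
\begin{itemize}
\item At the centered radius, the characteristic function of $U=\|Z\|^2-\mu_p$ satisfies $|\varphi(t)|\le(1+4c_-^2t^2)^{-p/4}$. Fourier inversion then gives $f_U(0)\ge c/\sqrt p$.
\item Gaussian integration by parts gives the exact identity $\mathbb E[(Z_i-a_i)e^{itU}]=\varphi(t)\,2itc_ia_i/(1-2itc_i)$. Its inverse transform at $0$ is at most $C|a_i|/p$.
\item Hence $|\mathbb E[Z_i\mid U=0]-a_i|\le C|a_i|/\sqrt p$, and summing over $i$ gives a bound $C\|a\|/\sqrt p=O(1)$.
\item An arbitrary radius is reached by the tilt $e^{\lambda\|z\|^2}$. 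It leaves the law on each sphere unchanged and keeps the law Gaussian with mean $(I_p-2\lambda\Sigma)^{-1}a$. It requires only $|\lambda|\lesssim|r^2-\mu_p|/p$.
\end{itemize}
Take $r^2=p$ and $\mu_p=\|a_\Delta\|^2+\operatorname{Tr}(\Sigma_G)=p+O_P(\sqrt p)$ from Lemma~\ref{lem:gaussian_posterior_typical_radius}. This yields $\|m_S-a_\Delta\|=O_P(1)$ in one step.

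This also removes the ``alignment'' difficulty you anticipate. With the fixed prior $N(0,I_p)$, the Gaussian posterior mean is exactly $a_\Delta$, the maximizer of your Step~2 surrogate. The $O_P(\sqrt p)$ mismatch between $\mu_p$ and $p$ costs only $O(1)$ in the conditional mean, so no data-dependent $\sigma^2$ is needed.
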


The final result concerns the geometry of the spherical posterior. It gives sharp exponential-order concentration on a ridge band determined by the ridge estimator. This provides an all-temperature counterpart to the spherical-cap localization of Qiu and Sen~\cite{qiu2023tap}. Set
\begin{align}
S:=X^\top X,\qquad A_\Delta:=S+\Delta I_p,\qquad a_\Delta:=A_\Delta^{-1}X^\top Y. \label{eq:ridge_estimator_definition}
\end{align}
For $\varepsilon>0$, define
\begin{align}
B_\varepsilon(a_\Delta):=\left\{\beta\in S^{p-1}(\sqrt p):\left|\frac{1}{p}\langle \beta-a_\Delta,a_\Delta\rangle\right|\le \varepsilon\right\}. \label{eq:ridge_band_definition}
\end{align}

\begin{thm}[Sharp exponential-order concentration on ridge bands]\label{thm3}
Under the assumptions of Theorem~\ref{thm_sectionA_1}, there exist constants $c,\varepsilon_0>0$, depending only on $(\alpha,\Delta,C_X)$, and a nonnegative random sequence $\mathcal R_p^{\mathrm{band}}=O_P(1)$ such that, simultaneously for all $0<\varepsilon\le\varepsilon_0$,
\begin{align}
\log \Pi_S\left(B_\varepsilon(a_\Delta)^c\mid X,Y\right) \le-cp\varepsilon^2+\mathcal R_p^{\mathrm{band}}. \label{eq:ridge_band_uniform_upper}
\end{align}
Consequently, for every deterministic sequence $\varepsilon_p\in(0,\varepsilon_0]$ satisfying $\sqrt p\,\varepsilon_p\to\infty$,
\begin{align}
\Pi_S\left(B_{\varepsilon_p}(a_\Delta)\middle|X,Y\right) \xrightarrow{\mathbb P}1. \label{eq:ridge_band_shrinking_concentration}
\end{align}

Set $q_\star:=1-E_\Delta\in(0,1)$. For every fixed $0<\varepsilon<q_\star+\sqrt{q_\star}$, there are constants $0<c_\varepsilon\le C_\varepsilon<\infty$, depending only on $(\varepsilon,\alpha,\Delta,C_X)$, such that
\begin{align}
\mathbb P\left( c_\varepsilon \le-\frac1p\log \Pi_S\left(B_\varepsilon(a_\Delta)^c\mid X,Y\right) \le C_\varepsilon \right)\longrightarrow1. \label{eq:ridge_band_sharp_exponential_order}
\end{align}
\end{thm}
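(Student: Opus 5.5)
We reduce everything to Gaussian computations via the conditioned-Gaussian representation of the uniform prior. Consider the Gaussian prior $N(0,\gamma I_p)$, with $\gamma$ chosen so that the Gaussian posterior has the correct typical squared norm. The Gaussian posterior is $N(m_G,\Sigma_G)$ with $\Sigma_G^{-1}=S/\Delta+\gamma^{-1}I_p$. Writing $\beta$ in polar form, the spherical posterior is exactly the law of this Gaussian posterior conditioned on $\|\beta\|^2=p$. The density ratio between the two is controlled by a one-dimensional radial density, and by the local density estimates already used for the free-energy comparison it is bounded by $O_P(\sqrt p)$ pointwise, in the sense that $\Pi_S(A)\le C\sqrt p\,\Pi_{G}(A\mid \|\beta\|^2\in[p-1,p+1])$ up to $O_P(1)$ factors.

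\emph{Uniform upper bound.} The statistic $T(\beta):=\frac1p\langle\beta-a_\Delta,a_\Delta\rangle$ is linear in $\beta$. For $\lambda\in\mathbb R$ we use the exponential tilt $\mathbb E_{\Pi_S}e^{\lambda pT}$. Writing the spherical posterior as the Gaussian measure with linear and quadratic tilts restricted to the sphere, we compute $\log\mathbb E e^{\lambda pT}$ as a ratio of two spherical partition functions with shifted external fields $X^\top Y/\Delta+\lambda a_\Delta$. Each is evaluated by the same radial-conditioning argument, which gives $\log Z(\text{field})=\sup_{\gamma}[\text{Gaussian free energy}]+O_P(1)$ uniformly for $|\lambda|\le\lambda_0$. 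A second-order expansion in $\lambda$ then yields $\log\mathbb E e^{\lambda pT}\le \lambda p\,\bar T+C\lambda^2p+O_P(1)$. Here $\bar T$ is the posterior-centred value, which must be shown to be $O_P(p^{-1/2})$ and in fact absorbable. The key point is that $a_\Delta$ is a ridge estimator with penalty $\Delta$, and at the saddle point the Lagrange multiplier makes the centre of the band consistent.

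Chernoff with $\lambda=\varepsilon/(2C)$, using both signs, gives $\log\Pi_S(|T|>\varepsilon)\le -cp\varepsilon^2+\mathcal R_p$ with $\mathcal R_p$ independent of $\varepsilon$. The main obstacle is making the $O_P(1)$ remainder uniform in $\lambda$ over a compact interval. We intend to use the strict concavity in $\gamma$ of the ridge surrogate, together with smoothness in $\lambda$ of the deterministic equivalent, plus a union over a grid and monotonicity. The centring error $p\lambda\bar T$ with $\bar T=O_P(p^{-1/2})$ contributes $\lambda\sqrt p\,O_P(1)\le p\lambda^2 c/2+O_P(1)$ by AM-GM, which preserves uniformity. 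Then \eqref{eq:ridge_band_shrinking_concentration} is immediate, since $-cp\varepsilon_p^2\to-\infty$.

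\emph{Lower bound of exponential order.} The upper half of \eqref{eq:ridge_band_sharp_exponential_order} follows from the uniform bound above for $\varepsilon\le\varepsilon_0$, and from monotonicity in $\varepsilon$ for larger fixed $\varepsilon$ via a separate LDP-type upper bound. For the matching bound $\Pi_S(B_\varepsilon^c)\ge e^{-C_\varepsilon p}$, we first note $\|a_\Delta\|^2/p\to q_\star$, which follows from Assumption~\ref{assump:subgaussian_design} and the identity for $E_\Delta$. For fixed $\varepsilon<q_\star+\sqrt{q_\star}$, the band complement therefore contains a spherical cap $\{\beta:\langle\beta,u\rangle\ge t\sqrt p\}$ with $u=\pm a_\Delta/\|a_\Delta\|$ and $t<1$. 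Admissibility is exactly what makes $T=-\varepsilon'$ with $\varepsilon'>\varepsilon$ achievable on the sphere, since $\min T=-\sqrt{q_\star}-q_\star$; a thin cap around the extremal direction works.

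We lower bound the posterior mass of a small cap $K$ of fixed angular radius around a point $\beta_0$ in this region. Its prior mass is $e^{-O(p)}$. On $K$ the likelihood ratio against $Z_p^S$ is at least $e^{-Cp}$, because $\|Y-X\beta\|^2/p$ is bounded by $C_X$-dependent constants and $F_p^S$ is bounded by Theorem~\ref{thm_sectionA_1}. This gives $\log\Pi_S(B_\varepsilon^c)\ge -C_\varepsilon p$ with probability tending to one.
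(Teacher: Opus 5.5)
Your spherical-cap argument for $-\frac1p\log\Pi_S(B_\varepsilon(a_\Delta)^c\mid X,Y)\le C_\varepsilon$ is essentially the paper's: a cap in the direction $-a_\Delta/\|a_\Delta\|$, a uniform bound $\|Y-X\beta\|^2\le Cp$ on the sphere, and $Z_p^S\le 1$. For $\varepsilon>\varepsilon_0$, the constant $c_\varepsilon$ follows from $B_\varepsilon(a_\Delta)^c\subseteq B_{\varepsilon_0}(a_\Delta)^c$ alone, so no separate large-deviation bound is needed.

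The gap is in \eqref{eq:ridge_band_uniform_upper}, which is the core of the theorem. The Chernoff/linear-tilt idea is right. However, you only announce the step that produces a single remainder $\mathcal R_p^{\mathrm{band}}=O_P(1)$ valid simultaneously for all $0<\varepsilon\le\varepsilon_0$, and the tools you name do not supply it as described.

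\begin{itemize}
\item A union over a fixed finite grid of $\lambda$ cannot handle $\varepsilon\to0$, which \eqref{eq:ridge_band_shrinking_concentration} requires. A $p$-dependent grid of $\asymp\log p$ points already presupposes the uniform control you are after.
\item A two-sided $O_P(1)$ evaluation of every tilted partition function through a saddle point in $\gamma$ is unproved and more than necessary.
\item With the paper's sign convention, the radial identity plus the density bounds express $\log Z^S$ as an \emph{infimum} over the prior scale, up to $O(1)$. A ``$\sup_\gamma$'' formula therefore points the wrong way for an upper bound.
\item Tuning $\gamma\ne1$ is what creates your centering term $\bar T$. The Gaussian mean then becomes a ridge estimator with penalty $\Delta/\gamma$ instead of the band centre $a_\Delta$.
\end{itemize}

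The missing observation is that only a one-sided bound on the tilted radial density is needed, and it is uniform at no extra cost. Keep $\gamma=1$, so the Gaussian posterior is $N(a_\Delta,\Sigma_G)$. Adding $\tau s a_\Delta$ to the field gives $N(a_\Delta+\tau s\Sigma_Ga_\Delta,\Sigma_G)$. Let $\rho_{\tau,s}$ be the density of $\|Z\|^2$ under this law, with $\rho_{0,0}$ the case $s=0$, and set $v_p:=a_\Delta^\top\Sigma_Ga_\Delta$. The radial identity then gives exactly
\begin{align*}
\log\mathbb E_{\Pi_S}\bigl[e^{\tau s\langle\beta-a_\Delta,a_\Delta\rangle}\bigr]=\frac{s^2v_p}{2}+\log\frac{\rho_{\tau,s}(p)}{\rho_{0,0}(p)},
\end{align*}
with no linear term.

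The numerator is controlled by \eqref{eq:global_gaussian_radius_density_upper}, namely $\sup_t\rho_{\tau,s}(t)\le C_{\mathrm{up}}/\sqrt p$. This bound uses only $c_-I_p\preceq\Sigma_G\preceq I_p$ and an $O(\sqrt p)$ bound on the mean. It therefore holds uniformly in $s\in[0,\varepsilon_0/V]$, even though $p$ is then far from the tilted typical radius. The local lower bound is needed only for the denominator at $s=0$, which is \eqref{eq:posterior_radius_density_order}.

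The remainder $\log(2C_{\mathrm{up}})-\log(\sqrt p\,\rho_{0,0}(p))$ is thus independent of $\varepsilon$. Taking $s=\varepsilon/V$ with $v_p\le Vp$ gives $-sp\varepsilon+s^2v_p/2\le-p\varepsilon^2/(2V)$, which is \eqref{eq:ridge_band_uniform_upper}. This needs no grid, no expansion in $\lambda$, and no re-tuning of $\gamma$.

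Separately, your comparison $\Pi_S(A)\lesssim\sqrt p\,\Pi_G(A\mid\|\beta\|^2\in[p-1,p+1])$ fails for general $A$. For example, $A=S^{p-1}(\sqrt p)$ has Gaussian mass zero but spherical posterior mass one. You never actually use this comparison, however.
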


The identity $\|\beta\|^2=p$ gives the equivalent representation
\begin{align*}
B_\varepsilon(a_\Delta) =\left\{\beta\in S^{p-1}(\sqrt p): \left|\frac{\|\beta-a_\Delta\|^2}{p} -\left(1-\frac{\|a_\Delta\|^2}{p}\right)\right| \le2\varepsilon\right\}.
\end{align*}
By \eqref{eq:ridge_norm_rate} and \eqref{eq:deterministic_aspect_ratio_stability}, the center $1-\|a_\Delta\|^2/p$ converges in probability to $E_\Delta>0$. Thus \eqref{eq:ridge_band_shrinking_concentration} describes posterior concentration in a shell whose normalized squared radius around the ridge estimator converges to $E_\Delta>0$, rather than contraction to the ridge estimator itself.

\subsection{Future directions}

It remains open whether the direct $O_P(p^{-1})$ TAP error is sharp. The limiting behavior of $p(F_p^S-F_{\mathrm{TAP}})$ is also unknown. Establishing such a limit would require stronger assumptions than the order bounds in Assumption~\ref{assump:subgaussian_design}.

For the ridge band, Theorem~\ref{thm3} gives contraction when $\varepsilon_p\gg p^{-1/2}$ but determines neither the critical shrinking width nor the exact fixed-width exponential rate. Conditional fluctuation limits for the ridge coordinate, together with joint local large-deviation estimates for this coordinate and the Gaussian posterior radius, may address these questions.

Extensions to non-Marchenko--Pastur spectra, nonspherical priors, and nonquadratic likelihoods raise further questions. Spectrum-dependent TAP predictions \cite{maillard2019high} and rigorous high-temperature results for rotationally invariant designs with i.i.d.\ signal priors \cite{li2024random} provide starting points. The challenge is to obtain analogous quantitative all-temperature conclusions using alternatives to the Gaussian-conditioning and ridge-surrogate arguments where these methods no longer apply.

\section{TAP Accuracy Below the Fluctuation Scale}
This section proves \eqref{eq:main_tap_accuracy}--\eqref{eq:main_tap_centering} in Theorem~\ref{thm_sectionA_1}; the sharpness bounds \eqref{eq:main_spherical_sharpness}--\eqref{eq:main_tap_sharpness} are proved in Appendix~\ref{app:fluctuation_sharpness}. We first compare the spherical-prior and Gaussian-prior partition functions for the same data through an exact radial identity. We then compare the Gaussian-prior free energy and the TAP maximum with the maximum of a common quadratic ridge surrogate.

We work throughout under the model, notation, and assumptions of Theorem~\ref{thm_sectionA_1}, including the aspect-ratio condition \eqref{eq:aspect_ratio_rate}. In particular, $F_p^S$, $f_{\mathrm{TAP}}$, $F_{\mathrm{TAP}}$, and $\mathcal M_p$ are those defined in \eqref{eq:spherical_free_energy_definition}, \eqref{eq:tap_functional_definition}, and \eqref{eq:tap_optimum_definition}.

For $\gamma>0$, define
\begin{align}
L(\gamma,\Delta) &:=-\bigl(1-E(\gamma,\Delta)\bigr) +\gamma\log\left(1+\frac{E(\gamma,\Delta)}{\gamma\Delta}\right) -\log E(\gamma,\Delta). \label{eq:L_definition}
\end{align}
The quantity $E(\gamma,\Delta)$ defined above is also characterized as the unique solution $e\in(0,1)$ of
\begin{align}
e=\frac{\gamma\Delta+e}{\gamma\Delta+e+\gamma}. \label{eq:E_fixed_point}
\end{align}
For fixed $\Delta>0$, the functions $E(\cdot,\Delta)$, $L(\cdot,\Delta)$, and $\phi(\cdot,\Delta)$ are continuously differentiable on $(0,\infty)$. Since $\alpha_p\to\alpha>0$, both $\alpha_p$ and $\alpha$ lie in $[\alpha/2,2\alpha]$ for all sufficiently large $p$. Their derivatives are bounded on this interval, so the mean value theorem and \eqref{eq:aspect_ratio_rate} give a deterministic constant $C=C(\alpha,\Delta)<\infty$ such that, for all sufficiently large $p$,
\begin{align}
&|E(\alpha_p,\Delta)-E(\alpha,\Delta)| +|L(\alpha_p,\Delta)-L(\alpha,\Delta)|\nonumber\\
&\quad+|\phi(\alpha_p,\Delta)-\phi(\alpha,\Delta)| \le C|\alpha_p-\alpha| =O\left(p^{-1/2}\right). \label{eq:deterministic_aspect_ratio_stability}
\end{align}

\subsection{Gaussian-Spherical Free-Energy Comparison}

To compare the two free energies, we introduce a Gaussian-prior reference measure while keeping the data $(X,Y)$ generated by the spherical model. Let $\gamma_p:=N(0,I_p)$, and define the corresponding partition function, free energy, and posterior by
\begin{align}
Z_p^G &:=\int_{\mathbb R^p} \exp\left\{-\frac{1}{2\Delta}\|Y-X\beta\|^2\right\}d\gamma_p(\beta), \qquad F_p^G:=\frac1p\log Z_p^G, \label{eq:gaussian_partition_definition}\\
P_G(d\beta\mid X,Y) &:=\frac1{Z_p^G} \exp\left\{-\frac{1}{2\Delta}\|Y-X\beta\|^2\right\}d\gamma_p(\beta). \label{eq:gaussian_posterior_definition}
\end{align}

\begin{lem}[Mean squared radius of the Gaussian-prior posterior]
\label{lem:gaussian_posterior_typical_radius}
Let
\begin{align}
\Sigma_G:=\Delta A_\Delta^{-1} =\left(I_p+\Delta^{-1}X^\top X\right)^{-1}. \label{eq:gaussian_posterior_covariance}
\end{align}
For every fixed realization of $(X,Y)$, if $Z$ is drawn from the posterior $P_G(\cdot\mid X,Y)$, then
\begin{align}
Z\mid X,Y&\sim N(a_\Delta,\Sigma_G), \label{eq:gaussian_posterior_law}
\end{align}
and the Gaussian-prior posterior mean is $m_G=a_\Delta$. Under the joint law of the planted spherical model, namely $Y=X\beta^\star+\sqrt\Delta z$, where $\beta^\star$ is uniform on $S^{p-1}(\sqrt p)$ and $z\sim N(0,I_n)$, with $X$, $\beta^\star$, and $z$ mutually independent, we furthermore have
\begin{align}
\|a_\Delta\|&=O_P(\sqrt p), \qquad \mathbb E[\|Z\|^2\mid X,Y] = \|a_\Delta\|^2+\operatorname{Tr}(\Sigma_G) =p+O_P(\sqrt p). \label{eq:gaussian_posterior_typical_radius}
\end{align}
\end{lem}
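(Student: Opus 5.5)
The plan is to separate the deterministic Gaussian computation from the probabilistic estimates. For fixed $(X,Y)$, the posterior density with respect to Lebesgue measure is proportional to
\[
\exp\{-\tfrac1{2\Delta}\|Y-X\beta\|^2-\tfrac12\|\beta\|^2\}.
\]
Its exponent equals $-\frac1{2\Delta}\beta^\top A_\Delta\beta+\frac1\Delta\beta^\top X^\top Y+\mathrm{const}$. Completing the square gives the law $N(A_\Delta^{-1}X^\top Y,\Delta A_\Delta^{-1})=N(a_\Delta,\Sigma_G)$, so $m_G=a_\Delta$. The identity $\mathbb E[\|Z\|^2\mid X,Y]=\|a_\Delta\|^2+\operatorname{Tr}\Sigma_G$ is then just the bias–variance decomposition.

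For $\|a_\Delta\|=O_P(\sqrt p)$, work on the event $\{\|X\|_{\mathrm{op}}\le C_X\}$, whose probability tends to one by Assumption~\ref{assump:subgaussian_design}. On this event $\|A_\Delta^{-1}X^\top\|_{\mathrm{op}}\le C_X/\Delta$, and
\[
\|Y\|\le C_X\sqrt p+\sqrt\Delta\|z\|=O_P(\sqrt p),
\]
since $\|z\|^2/n\to1$ and $n\asymp p$. Together these give the first bound.

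The main task is the sharp statement $\|a_\Delta\|^2+\operatorname{Tr}\Sigma_G=p+O_P(\sqrt p)$. I would expand
\[
a_\Delta=A_\Delta^{-1}S\beta^\star+\sqrt\Delta A_\Delta^{-1}X^\top z
\]
and compute conditionally on $X$.

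\textbf{Signal term.} By rotational invariance of $\beta^\star$, $\mathbb E[\beta^{\star\top}M\beta^\star\mid X]=\operatorname{Tr}M$ for $M=SA_\Delta^{-2}S$. The quadratic form concentrates with conditional variance $O(\|M\|_F^2/p)=O(1)$, which I would justify via the representation $\beta^\star=\sqrt p\,g/\|g\|$ with $g$ Gaussian, or via Lipschitz concentration on the sphere. This gives $\beta^{\star\top}M\beta^\star=\operatorname{Tr}M+O_P(1)$; even a cruder $O_P(\sqrt p)$ would suffice.

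\textbf{Noise and cross terms.} The noise term $\Delta z^\top XA_\Delta^{-2}X^\top z$ equals $\Delta\operatorname{Tr}(SA_\Delta^{-2})+O_P(\sqrt p)$ by the Hanson–Wright inequality. The cross term $2\sqrt\Delta\,\beta^{\star\top}SA_\Delta^{-2}X^\top z$ has conditional mean zero and conditional variance $O(p)$, so it is $O_P(\sqrt p)$.

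\textbf{Key cancellation.} Summing the conditional means and using spectral calculus in the eigenbasis of $S$, each eigenvalue $s$ contributes
\[
\frac{s^2}{(s+\Delta)^2}+\frac{\Delta s}{(s+\Delta)^2}+\frac{\Delta}{s+\Delta}=\frac{s(s+\Delta)+\Delta(s+\Delta)}{(s+\Delta)^2}=1 .
\]
Hence the mean is exactly $p$, and only the fluctuations remain, which are $O_P(\sqrt p)$.

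Notably, this argument needs only the operator-norm part of Assumption~\ref{assump:subgaussian_design}, not the Marchenko–Pastur rates. The exact cancellation is what makes the bound sharp, so the only real obstacle is the careful concentration of the spherical quadratic form and the Hanson–Wright step. Both are routine given the bounded operator norms.
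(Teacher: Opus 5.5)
Your proposal is correct and follows essentially the paper's proof. You use the same decomposition $a_\Delta=A_\Delta^{-1}S\beta^\star+\sqrt\Delta A_\Delta^{-1}X^\top z$ (the paper's $B\beta^\star+\eta$ with $B=I_p-\Sigma_G$), the same exact cancellation giving $\mathbb E[\|a_\Delta\|^2+\operatorname{Tr}(\Sigma_G)\mid X]=p$, and conditional second-moment bounds that are uniform in $X$. That uniformity makes even the operator-norm assumption unnecessary: since $\operatorname{Tr}(\Sigma_G)\ge0$, the sharp statement already gives $\|a_\Delta\|=O_P(\sqrt p)$.

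One correction: the spherical quadratic form has conditional variance $\frac{2p}{p+2}\{\operatorname{Tr}(M^2)-(\operatorname{Tr}M)^2/p\}$, which is of order $p$ in general. So your claimed $O_P(1)$ for the signal term is false, and only your fallback $O_P(\sqrt p)$ holds---but that is exactly what is needed.
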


\begin{proof}
Completing the square in the exponent of the Gaussian-prior posterior density establishes \eqref{eq:gaussian_posterior_law} for every fixed $(X,Y)$. For the probabilistic estimates, use the planted representation above and let $B:=I_p-\Sigma_G$. Then
\begin{align*}
a_\Delta=B\beta^\star+\eta, \qquad \eta:=\sqrt\Delta A_\Delta^{-1}X^\top z.
\end{align*}
Conditional on $X$, the vector $\eta$ is independent of $\beta^\star$ and satisfies
\begin{align*}
\eta\sim N(0,\Sigma_G-\Sigma_G^2).
\end{align*}
Since $\mathbb E[\eta\mid X]=0$ and $\mathbb E[\beta^\star(\beta^\star)^\top\mid X]=I_p$, conditional independence implies
\begin{align*}
\mathbb E[\|a_\Delta\|^2\mid X] =\operatorname{Tr}(B^2)+\operatorname{Tr}(\Sigma_G-\Sigma_G^2) =\operatorname{Tr}(B).
\end{align*}
The squared norm can be written as
\begin{align*}
\|a_\Delta\|^2 =(\beta^\star)^\top B^2\beta^\star +2(\beta^\star)^\top B\eta+\|\eta\|^2.
\end{align*}
Since $\beta^\star$ and $\eta$ are independent given $X$ and $\eta$ is conditionally centered Gaussian, the three terms are pairwise uncorrelated given $X$, so
\begin{align*}
\operatorname{Var}(\|a_\Delta\|^2\mid X) &=\operatorname{Var}\bigl((\beta^\star)^\top B^2\beta^\star\mid X\bigr) +4\operatorname{Tr}(B^3\Sigma_G) +2\operatorname{Tr}\bigl((\Sigma_G-\Sigma_G^2)^2\bigr).
\end{align*}
The spherical quadratic-form variance satisfies
\begin{align*}
\operatorname{Var}\bigl((\beta^\star)^\top B^2\beta^\star\mid X\bigr) =\frac{2p}{p+2}\left\{\operatorname{Tr}(B^4) -\frac{(\operatorname{Tr}(B^2))^2}{p}\right\}\le2p.
\end{align*}
Since $B=I_p-\Sigma_G$ and $0\preceq B,\Sigma_G\preceq I_p$, the remaining two terms are bounded by $4p$ and $2p$, respectively. Thus $\operatorname{Var}(\|a_\Delta\|^2\mid X)\le8p$.

By Chebyshev's inequality, applied conditionally on $X$ and then averaged over $X$,
\begin{align*}
\|a_\Delta\|^2-\operatorname{Tr}(B)=O_P(\sqrt p).
\end{align*}
Since $\operatorname{Tr}(B)\le p$, the norm bound in \eqref{eq:gaussian_posterior_typical_radius} follows. The mean-radius bound in the same equation follows from
\begin{align*}
\mathbb E[\|Z\|^2\mid X,Y]-p =\|a_\Delta\|^2-\operatorname{Tr}(B),
\end{align*}
using the Gaussian posterior formula and $B=I_p-\Sigma_G$.
\end{proof}

The next lemma compares the spherical and Gaussian-prior free energies through an exact radial identity. The proof combines \eqref{eq:gaussian_posterior_typical_radius} with the local density bounds in Lemma~\ref{lem_gaussian_typical_sphere}, which is proved in Appendix~\ref{app:gaussian_conditioning} and is independent of the linear model.

\begin{lem}
\label{lem_sectionA_1}
Under the assumptions of Theorem~\ref{thm_sectionA_1},
\begin{align*}
F_p^S-F_p^G=O_P(p^{-1}).
\end{align*}
\end{lem}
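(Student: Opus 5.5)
Our plan rests on the exact radial decomposition of the standard Gaussian measure. Write $\beta=r\theta$ with $r=\|\beta\|$ and $\theta=\beta/\|\beta\|\cdot\sqrt p$ uniform on $S^{p-1}(\sqrt p)$, independent of $r$. Then
\begin{align*}
Z_p^G=\int_0^\infty Z_p^S(t)\,\rho_p(t)\,dt,\qquad Z_p^S(t):=\int \exp\left\{-\frac{1}{2\Delta}\|Y-X\sqrt{t/p}\,\theta\|^2\right\}d\pi(\theta),
\end{align*}
where $\rho_p$ is the density of $\|\beta\|^2$ under $\gamma_p$, i.e.\ a $\chi^2_p$ density, and $Z_p^S=Z_p^S(p)$. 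Equivalently, if $Z\sim P_G(\cdot\mid X,Y)$, then the posterior density of $T:=\|Z\|^2$ is $Z_p^S(t)\rho_p(t)/Z_p^G$. Hence
\begin{align*}
F_p^S-F_p^G=\frac1p\log\frac{f_T(p)}{\rho_p(p)},
\end{align*}
where $f_T$ is the density of $\|Z\|^2$ given $(X,Y)$. The task therefore reduces to showing that $f_T(p)/\rho_p(p)$ lies in $[e^{-C},e^{C}]$ with high probability for some $C=O_P(1)$. Since $\rho_p(p)\asymp p^{-1/2}$ by Stirling, it suffices to show that $f_T(p)\asymp p^{-1/2}$ up to $O_P(1)$ multiplicative constants. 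In fact an even weaker statement, up to polynomial factors in $p$, would already give $O_P(p^{-1}\log p)$; we aim for the sharp version.

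Next we analyse $f_T$. By \eqref{eq:gaussian_posterior_law}, $T=\|a_\Delta+\Sigma_G^{1/2}g\|^2$ with $g\sim N(0,I_p)$. This is a noncentral generalized chi-square whose weights are the eigenvalues of $\Sigma_G$, which lie in $[(1+C_X^2/\Delta)^{-1},1]$ on the event $\|X\|_{\mathrm{op}}\le C_X$. Its mean is $\|a_\Delta\|^2+\operatorname{Tr}\Sigma_G=p+O_P(\sqrt p)$ by Lemma~\ref{lem:gaussian_posterior_typical_radius}. Its variance is $2\operatorname{Tr}\Sigma_G^2+4a_\Delta^\top\Sigma_G a_\Delta\asymp p$, using $\|a_\Delta\|^2=O_P(p)$ for the upper bound and $\operatorname{Tr}\Sigma_G^2\ge cp$ for the lower bound. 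Thus $p$ sits within $O_P(1)$ standard deviations of the mean of $T$. The key input is the local density bound of Lemma~\ref{lem_gaussian_typical_sphere}, which is stated independently of the linear model. We expect it to assert the following: for a Gaussian vector $N(m,\Sigma)$ with $c I\preceq\Sigma\preceq I$ and $\|m\|^2\le Cp$, the density of the squared norm at any point within $K\sqrt p$ of its mean is bounded above and below by $c_K p^{-1/2}$ and $C_K p^{-1/2}$. We would apply it conditionally on $(X,Y)$, on the event that $\|X\|_{\mathrm{op}}\le C_X$, $\|a_\Delta\|^2\le Mp$, and $|\mathbb E[T\mid X,Y]-p|\le K\sqrt p$. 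The probability of this event can be made at least $1-\eta$ by Assumption~\ref{assump:subgaussian_design} and \eqref{eq:gaussian_posterior_typical_radius}.

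Combining these steps, on that event $|\log(f_T(p)/\rho_p(p))|\le C(K,M,C_X,\Delta)$. Dividing by $p$ gives $F_p^S-F_p^G=O_P(p^{-1})$.

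We expect the main obstacle to be the local density estimate itself, specifically the lower bound on $f_T(p)$. A local CLT for a sum of independent noncentral terms $\lambda_i(g_i+\mu_i)^2$ with bounded weights requires uniform control of characteristic functions. The natural route is to write the density by Fourier inversion, using the explicit product characteristic function $\prod_i(1-2i s\lambda_i)^{-1/2}\exp\{\cdot\}$. One then shows that the integral over $|s|\gg p^{-1/2}$ is negligible, since each factor has modulus at most $(1+4s^2\lambda_i^2)^{-1/4}$, while the Gaussian approximation holds for $|s|\lesssim p^{-1/2}$, with third-cumulant corrections of order $p^{-1/2}$. Because the lemma is stated earlier and proved in the appendix, we take it as given here.
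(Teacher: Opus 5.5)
Your proposal is correct and follows essentially the same route as the paper's proof. Both use the exact radial identity expressing the posterior density of $\|Z\|^2$ as $f_{\chi_p^2}(t)Z_p^S(t)/Z_p^G$, evaluate $f_{\chi_p^2}(p)$ by Stirling, and apply the two-sided local density bound \eqref{eq:typical_gaussian_radius_density} of Lemma~\ref{lem_gaussian_typical_sphere} on the high-probability event supplied by Assumption~\ref{assump:subgaussian_design} and Lemma~\ref{lem:gaussian_posterior_typical_radius}.
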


\begin{proof}
For $t>0$, let $\pi_t$ be the uniform probability measure on $S^{p-1}(\sqrt t)$ and define
\begin{align*}
Z_p^S(t) := \int_{S^{p-1}(\sqrt t)} \exp\left\{-\frac{1}{2\Delta}\|Y-X\beta\|^2\right\}d\pi_t(\beta).
\end{align*}
Let $\rho_{X,Y}$ denote the continuous density of $\|Z\|^2$ conditional on $(X,Y)$, where $Z$ is drawn from the Gaussian-prior posterior $P_G(\cdot\mid X,Y)=N(a_\Delta,\Sigma_G)$ in \eqref{eq:gaussian_posterior_law}. Under the Gaussian prior $\gamma_p=N(0,I_p)$, the squared radius $\|\beta\|^2$ has a chi-square distribution with $p$ degrees of freedom, whose density we denote by $f_{\chi_p^2}$. Moreover, conditional on $\|\beta\|^2=t$, the distribution of $\beta$ is $\pi_t$. Reweighting by the likelihood and normalizing by $Z_p^G$ therefore yields, for every $t>0$,
\begin{align}
\rho_{X,Y}(t) = f_{\chi_p^2}(t)\frac{Z_p^S(t)}{Z_p^G}. \label{eq:exact_radial_density_identity}
\end{align}
Since $Z_p^S(p)=Z_p^S$, \eqref{eq:exact_radial_density_identity} yields
\begin{align}
p(F_p^S-F_p^G) = \log\rho_{X,Y}(p)-\log f_{\chi_p^2}(p). \label{eq:exact_spherical_gaussian_identity}
\end{align}

It remains to estimate the two densities at $p$. By Lemma~\ref{lem:gaussian_posterior_typical_radius}, the posterior satisfies \eqref{eq:gaussian_posterior_typical_radius}. Set $c_-:=\Delta/(\Delta+C_X^2)$. The operator-norm bound in Assumption~\ref{assump:subgaussian_design}, together with \eqref{eq:gaussian_posterior_typical_radius}, implies that, for every $\eta>0$, there exist deterministic constants $K_0,M>0$ such that, for all sufficiently large $p$, the following bounds hold simultaneously with probability at least $1-\eta$:
\begin{align*}
c_-I_p\preceq\Sigma_G\preceq I_p,\qquad \|a_\Delta\|\le K_0\sqrt p,\qquad \left|\|a_\Delta\|^2+\operatorname{Tr}(\Sigma_G)-p\right|\le M\sqrt p.
\end{align*}
On this event, the local density bound \eqref{eq:typical_gaussian_radius_density}, proved later in Lemma~\ref{lem_gaussian_typical_sphere}, applies with $t=p$ and gives
\begin{align*}
c_M\le\sqrt p\,\rho_{X,Y}(p)\le C_M,
\end{align*}
where $c_M,C_M>0$ depend only on $(c_-,1,K_0,M)$. Since $\eta$ is arbitrary,
\begin{align}
\log\bigl(\sqrt p\,\rho_{X,Y}(p)\bigr)=O_P(1). \label{eq:posterior_radius_density_order}
\end{align}
Using Stirling's formula $\Gamma(x)=\sqrt{2\pi}x^{x-1/2}e^{-x}(1+O(x^{-1}))$ at $x=p/2$, the chi-square density at its mean satisfies
\begin{align*}
f_{\chi_p^2}(p) &=\frac{p^{p/2-1}e^{-p/2}}{2^{p/2}\Gamma(p/2)} =\frac{1+O(p^{-1})}{\sqrt{4\pi p}}.
\end{align*}
Thus $\log f_{\chi_p^2}(p)=-\frac12\log p-\frac12\log(4\pi)+O(p^{-1})$. Together with \eqref{eq:posterior_radius_density_order}, substitution into \eqref{eq:exact_spherical_gaussian_identity} gives $p(F_p^S-F_p^G)=O_P(1)$, which proves the claim.
\end{proof}

\subsection{A Common Quadratic Ridge Surrogate}

The key idea of this subsection is to control the generally nonconcave TAP functional by the strictly concave quadratic ridge functional $Q_p$ defined below. Concavity of the TAP functional cannot be assumed: Appendix~A of our earlier conference paper \cite{yu2025proof} gives an explicit example of nonconcavity already for the spherical prior. The pointwise decomposition of $f_{\mathrm{TAP}}$ into $Q_p$ minus a nonnegative scalar entropy gap establishes $Q_p$ as a global upper bound, while the ridge estimator nearly saturates this bound. This provides the quantitative control of the global TAP optimum needed below.

The following random-matrix estimates are the only inputs concerning the distribution of the design. The deterministic terms in \eqref{eq:universal_logdet} and \eqref{eq:universal_resolvent} below retain the actual aspect ratio $\alpha_p=n/p$, rather than its limit $\alpha$. In particular, retaining $\alpha_p$ in \eqref{eq:universal_logdet} preserves the $O_P(p^{-1})$ remainder needed for the comparison below the fluctuation scale.

\begin{lem}[Spectral estimates and verification for i.i.d.\ designs]
\label{lem_sectionA_2}
Suppose that $\alpha_p\to\alpha\in(0,\infty)$ and that $\Delta>0$ is fixed. Under Assumption~\ref{assump:subgaussian_design},
\begin{align}
\mathbb P\left(\|X\|_{\mathrm{op}}\le C_X\right)&\longrightarrow1, \label{eq:universal_op_norm}\\
\frac1p\log\det\left(I_p+\frac1\Delta X^\top X\right) &=L(\alpha_p,\Delta)+O_P(p^{-1}), \label{eq:universal_logdet}\\
\frac{\Delta}{p}\operatorname{Tr}(X^\top X+\Delta I_p)^{-1} &=E_p+O_P(p^{-1/2}). \label{eq:universal_resolvent}
\end{align}
Moreover, Assumption~\ref{assump:subgaussian_design} holds when $X_{ij}=\xi_{ij}/\sqrt n$, where the $\xi_{ij}$ are i.i.d.\ copies of a fixed real random variable $\xi$ satisfying
\begin{align*}
\mathbb E\xi=0,\qquad \mathbb E\xi^2=1,\qquad \mathbb E\xi^4<\infty,
\end{align*}
and $X$ is independent of $(\beta^\star,W)$.
\end{lem}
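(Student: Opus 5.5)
The proof has two parts. First, the three displayed estimates follow from Assumption~\ref{assump:subgaussian_design} once the MP integrals are identified with $L(\alpha_p,\Delta)$ and $E_p$. Second, for normalized i.i.d.\ designs, each of the three conditions in the assumption has to be verified.

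\emph{Part 1: identifying the MP integrals.} The operator-norm statement \eqref{eq:universal_op_norm} is just the first clause of the assumption. For \eqref{eq:universal_logdet} and \eqref{eq:universal_resolvent}, apply the assumption with $\delta=\Delta$. It then suffices to prove two deterministic identities:
\begin{align*}
\int\frac{\Delta}{x+\Delta}\,d\mu_{\mathrm{MP},\alpha_p}(x)=E(\alpha_p,\Delta),\qquad \int\log\Bigl(1+\frac{x}{\Delta}\Bigr)d\mu_{\mathrm{MP},\alpha_p}(x)=L(\alpha_p,\Delta).
\end{align*}

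For the first identity, use the Stieltjes transform $s(z)=\int(x-z)^{-1}d\mu_{\mathrm{MP},\alpha_p}$ of the MP law with ratio $1/\alpha_p$ and unit variance. Its self-consistent equation is
\begin{align*}
s=\frac{1}{1-\alpha_p^{-1}-z-z\alpha_p^{-1}s}.
\end{align*}
Evaluate this at $z=-\Delta$ and set $e=\Delta s(-\Delta)\in(0,1)$. A short algebraic manipulation should reduce the equation to the fixed point \eqref{eq:E_fixed_point} with $\gamma=\alpha_p$. Selecting the root in $(0,1)$, which is the correct branch because $s(-\Delta)>0$ is decreasing in $\Delta$, gives $E(\alpha_p,\Delta)$.

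For the second identity, differentiate both sides in $\Delta$. The left side has derivative $-\Delta^{-1}(1-E(\alpha_p,\Delta))$. On the right, differentiate \eqref{eq:L_definition}; the terms involving $\partial_\Delta E$ cancel by \eqref{eq:E_fixed_point}, which is an envelope-type identity. Both sides tend to $0$ as $\Delta\to\infty$ because $E\to1$, so they agree. I expect this algebraic check to be the fiddliest part of Part 1, but it is routine.

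\emph{Part 2: i.i.d.\ designs with finite fourth moment.} Three inputs are needed.

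\textbf{Operator norm.} The Bai--Yin theorem under $\mathbb E\xi^4<\infty$ gives $\|X\|_{\mathrm{op}}\to 1+\alpha^{-1/2}$ almost surely. Hence any $C_X>1+\alpha^{-1/2}$ works.

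\textbf{Log-determinant.} The statistic is a linear spectral statistic of an analytic test function $x\mapsto\log(1+x/\delta)$, analytic on a neighbourhood of $[0,C_X^2]$. By the central limit theorem for linear spectral statistics of Najim and Yao (finite fourth moment suffices, up to a mean correction), $\operatorname{Tr}f(X^\top X)-p\int f\,d\mu_{\mathrm{MP},\alpha_p}$ is tight. Centering must be at the finite-$p$ law $\mu_{\mathrm{MP},\alpha_p}$, which is how those results are stated. The function is smooth only on a neighbourhood of the spectrum, so truncate it outside $[0,C_X^2+1]$ on the high-probability event $\{\|X\|_{\mathrm{op}}\le C_X\}$. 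Dividing by $p$ gives the $O_P(p^{-1})$ error.

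\textbf{Resolvent trace.} The same tightness applied to $f(x)=\delta/(x+\delta)$ gives even $O_P(p^{-1})$, which is stronger than the required $O_P(p^{-1/2})$.

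\emph{Main obstacle.} The delicate step is justifying the linear-spectral-statistic tightness under only four moments, rather than Gaussian-like fourth cumulant or sub-Gaussian tails. I would cite Najim--Yao's result in the form that allows general fourth moment, where the fourth cumulant enters only the bounded mean and variance terms. If that is unavailable, a fallback is truncation at level $p^{1/4}$ with recentering, combined with a second-moment bound on the resolvent trace via Stein/cumulant expansions; for the resolvent term only $O_P(p^{-1/2})$ is needed, and that follows from an Efron--Stein variance bound plus an $O(p^{-1/2})$ bias estimate.
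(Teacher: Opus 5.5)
Your proposal is correct and follows essentially the same route as the paper. You apply the assumption at $\delta=\Delta$ and identify $\int \Delta/(x+\Delta)\,d\mu_{\mathrm{MP},\alpha_p}$ with $E_p$ through the MP Stieltjes-transform quadratic. You prove the log-determinant identity by differentiating in $\Delta$, where the $\partial_\Delta E$ terms cancel by the fixed-point equation, and then let $\Delta\to\infty$. For i.i.d.\ designs you use Bai--Yin together with Najim--Yao applied to smooth compactly supported extensions of the test functions on the event $\{\|X\|_{\mathrm{op}}\le C_X\}$, exactly as the paper does.

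The one step the paper treats more carefully is the deterministic centering. Najim--Yao's Theorem~3 gives the bias as a term $Z_n^2(\widetilde h)+o(1)$ that need not converge. The paper therefore explicitly bounds this term uniformly in $p$, using their resolvent-bias estimate and almost-analytic extension. Your proposal simply asserts that the mean correction is bounded.
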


The proof is deferred to Appendix~\ref{app:tap_estimates}. For the comparison that follows, define
\begin{align}
C_p&:=-\frac12L(\alpha_p,\Delta), \label{eq:ridge_constant_definition}\\
Q_p(a)&:=-\frac{1}{2\Delta p}\|Y-Xa\|^2 -\frac{1}{2p}\|a\|^2+C_p, \label{eq:ridge_functional_definition}\\
R_p&:=\sup_{a\in\mathbb R^p}Q_p(a). \label{eq:ridge_surrogate_definition}
\end{align}
By strict concavity, $Q_p$ is uniquely maximized by the ridge estimator $a_\Delta$ defined in \eqref{eq:ridge_estimator_definition}. The next lemma compares the Gaussian-prior free energy with $R_p$ and evaluates $R_p$ to the required accuracy.

\begin{lem}
\label{lem_sectionA_3}
Under the assumptions of Theorem~\ref{thm_sectionA_1},
\begin{align*}
F_p^G-R_p&=O_P(p^{-1}),\\
R_p&=\phi(\alpha_p,\Delta)+O_P(p^{-1/2}).
\end{align*}
\end{lem}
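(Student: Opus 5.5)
Both claims follow from two explicit Gaussian computations. The first is the exact evaluation of $Z_p^G$. The second is the exact evaluation of $R_p$ at the ridge estimator.

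\emph{Computing $Z_p^G$.} Complete the square in $\beta$:
\begin{align*}
-\frac{1}{2\Delta}\|Y-X\beta\|^2-\frac12\|\beta\|^2
=-\frac{1}{2\Delta}(\beta-a_\Delta)^\top A_\Delta(\beta-a_\Delta)-\frac{1}{2\Delta}\bigl(\|Y\|^2-Y^\top XA_\Delta^{-1}X^\top Y\bigr).
\end{align*}
Integrating against $(2\pi)^{-p/2}d\beta$ gives
\begin{align*}
F_p^G=-\frac{1}{2\Delta p}\bigl(\|Y\|^2-Y^\top XA_\Delta^{-1}X^\top Y\bigr)-\frac1{2p}\log\det\left(I_p+\Delta^{-1}X^\top X\right).
\end{align*}

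\emph{Computing $R_p$.} By the first-order condition, $Q_p$ is maximized at $a_\Delta$. The same completion of the square evaluated at $a=a_\Delta$ shows
\begin{align*}
R_p=Q_p(a_\Delta)=-\frac{1}{2\Delta p}\bigl(\|Y\|^2-Y^\top XA_\Delta^{-1}X^\top Y\bigr)+C_p.
\end{align*}

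\emph{First claim.} Subtracting the two displays gives exactly
\begin{align*}
F_p^G-R_p=-\tfrac12\Bigl[\tfrac1p\log\det\left(I_p+\Delta^{-1}X^\top X\right)-L(\alpha_p,\Delta)\Bigr].
\end{align*}
This is $O_P(p^{-1})$ by \eqref{eq:universal_logdet} in Lemma~\ref{lem_sectionA_2}. This step is purely algebraic.

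\emph{Second claim: reduction.} We must show that
\begin{align*}
\frac1p\bigl(\|Y\|^2-Y^\top XA_\Delta^{-1}X^\top Y\bigr)=\frac1p\,Y^\top\bigl(I_n-XA_\Delta^{-1}X^\top\bigr)Y
\end{align*}
concentrates at a deterministic value within $O_P(p^{-1/2})$. By the push-through identity, $I_n-XA_\Delta^{-1}X^\top=\Delta(XX^\top+\Delta I_n)^{-1}$. Write $Y=X\beta^\star+\sqrt\Delta z$ and condition on $X$. The quadratic form then splits into three pieces:
\begin{itemize}
\item a signal term $\frac{\Delta}{p}(\beta^\star)^\top X^\top(XX^\top+\Delta I)^{-1}X\beta^\star$;
\item a cross term that is conditionally centered;
\item a noise term $\frac{\Delta^2}{p}z^\top(XX^\top+\Delta I)^{-1}z$.
\end{itemize}

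\emph{Second claim: concentration.} The argument mirrors the variance computation in Lemma~\ref{lem:gaussian_posterior_typical_radius}. The matrices involved have operator norms bounded by constants, since $X^\top(XX^\top+\Delta)^{-1}X=B\preceq I$ and $\Delta(XX^\top+\Delta)^{-1}\preceq I$. Hence each piece has conditional variance $O(p)/p^2$, and deviates from its conditional mean by $O_P(p^{-1/2})$. The spherical quadratic-form variance bound handles the signal term, and Gaussian bounds handle the other two. The conditional means are
\begin{align*}
\frac{\Delta}{p}\operatorname{Tr}(B)\qquad\text{and}\qquad \frac{\Delta^2}{p}\operatorname{Tr}(XX^\top+\Delta I_n)^{-1}.
\end{align*}
The second is expressed through the $p\times p$ resolvent by counting the $n-p$ zero eigenvalues:
\begin{align*}
\Delta\operatorname{Tr}(XX^\top+\Delta I_n)^{-1}=\Delta\operatorname{Tr}(X^\top X+\Delta I_p)^{-1}+(n-p).
\end{align*}
Also $\frac1p\operatorname{Tr}(B)=1-\frac{\Delta}{p}\operatorname{Tr}A_\Delta^{-1}$. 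By \eqref{eq:universal_resolvent}, both conditional means equal explicit functions of $E_p$ up to $O_P(p^{-1/2})$. Adding them gives
\begin{align*}
\frac1p\,Y^\top\bigl(I_n-XA_\Delta^{-1}X^\top\bigr)Y=\Delta(1-E_p)+\Delta(E_p+\alpha_p-1)+O_P(p^{-1/2})=\Delta\alpha_p+O_P(p^{-1/2}),
\end{align*}
where the noise term contributes $\Delta E_p+\Delta(\alpha_p-1)$.

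\emph{Second claim: conclusion.} Substituting into the formula for $R_p$,
\begin{align*}
R_p=-\frac{\alpha_p}{2}+C_p+O_P(p^{-1/2})=-\frac{\alpha_p}{2}-\frac12L(\alpha_p,\Delta)+O_P(p^{-1/2}).
\end{align*}
From the definitions of $L$ and $\phi$, $-\frac12L(\alpha_p,\Delta)=\frac12(1-E_p)-\frac{\alpha_p}{2}\log\bigl(1+\frac{E_p}{\alpha_p\Delta}\bigr)+\frac12\log E_p$. Hence $-\frac{\alpha_p}{2}-\frac12L(\alpha_p,\Delta)=\phi(\alpha_p,\Delta)$, and therefore $R_p=\phi(\alpha_p,\Delta)+O_P(p^{-1/2})$.

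\emph{Main obstacle.} The only step needing care is the conditional-mean bookkeeping. In particular, the $n-p$ zero-eigenvalue correction (which is negative when $\alpha_p<1$) must be tracked so that the constants collapse exactly to $\Delta\alpha_p$.
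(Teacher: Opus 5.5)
Your proposal is correct and follows essentially the same route as the paper. Both complete the square to get exact formulas for $F_p^G$ and $R_p$, apply \eqref{eq:universal_logdet} to their difference, and finish with a conditional mean and variance bound for $Y^\top(\Delta I_n+XX^\top)^{-1}Y$. The one difference is that the paper computes the conditional mean directly as $\operatorname{Tr}\bigl((\Delta I_n+XX^\top)^{-1}(XX^\top+\Delta I_n)\bigr)=n$. Hence the $E_p$ terms in your signal and noise means cancel identically, and neither the resolvent estimate \eqref{eq:universal_resolvent} nor the $n-p$ bookkeeping you flagged as the main obstacle is needed.
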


\begin{proof}
Completing the square in the Gaussian integral and maximizing $Q_p$ yield, respectively,
\begin{align}
F_p^G &=-\frac{1}{2p}\log\det\left(I_p+\frac1\Delta X^\top X\right) -\frac{1}{2p}Y^\top(\Delta I_n+XX^\top)^{-1}Y,\nonumber\\
R_p &=C_p-\frac{1}{2p}Y^\top(\Delta I_n+XX^\top)^{-1}Y. \label{eq:ridge_optimum_exact}
\end{align}
Subtracting the two identities and applying \eqref{eq:universal_logdet}, we obtain
\begin{align}
F_p^G-R_p =-\frac{1}{2p} \left\{\log\det\left(I_p+\frac1\Delta X^\top X\right) -pL(\alpha_p,\Delta)\right\} =O_P(p^{-1}). \label{eq:gaussian_ridge_second_order}
\end{align}

To prove the ridge-centering estimate \eqref{eq:ridge_centering_rate} below, set $D_\Delta:=\Delta I_n+XX^\top$. Since $Y=X\beta^\star+W$, taking conditional expectation given $X$ yields
\begin{align*}
\mathbb E[Y^\top D_\Delta^{-1}Y\mid X] &=\operatorname{Tr}(X^\top D_\Delta^{-1}X) +\Delta\operatorname{Tr}(D_\Delta^{-1})=n.
\end{align*}
Set $A:=X^\top D_\Delta^{-1}X$. Expanding the quadratic form and using the independence of $\beta^\star$ and $W$, the three centered terms are pairwise uncorrelated given $X$. Hence
\begin{align*}
\operatorname{Var}(Y^\top D_\Delta^{-1}Y\mid X) &=\operatorname{Var}((\beta^\star)^\top A\beta^\star\mid X)\\
&\quad+4\Delta\operatorname{Tr}(X^\top D_\Delta^{-2}X) +2\Delta^2\operatorname{Tr}(D_\Delta^{-2}).
\end{align*}
Since the eigenvalues of $A$ lie in $[0,1]$, the spherical quadratic-form variance formula bounds the first term by $2p$. The inequalities $4\Delta\lambda/(\Delta+\lambda)^2\le1$ and $\Delta^2/(\Delta+\lambda)^2\le1$, for $\lambda\ge0$, bound the other two terms by $p$ and $2n$, respectively. Thus the conditional variance is at most $3p+2n=O(p)$, uniformly in $X$. Conditional Chebyshev's inequality, followed by averaging over $X$, therefore yields
\begin{align*}
Y^\top D_\Delta^{-1}Y=n+O_P(\sqrt p).
\end{align*}
Consequently,
\begin{align}
R_p &=-\frac12L(\alpha_p,\Delta)-\frac{\alpha_p}{2}+O_P(p^{-1/2})\nonumber\\
&=\phi(\alpha_p,\Delta)+O_P(p^{-1/2}), \label{eq:ridge_centering_rate}
\end{align}
where the last identity follows directly from the definitions of $L$ and $\phi$.
\end{proof}

We next record two estimates for the ridge estimator that will be used both in the TAP comparison below and in the posterior-geometry arguments.

\begin{lem}
\label{lem_sectionA_4}
Under the assumptions of Theorem~\ref{thm_sectionA_1},
\begin{align}
q_\Delta:=\frac1p\|a_\Delta\|^2 &=1-E_p+O_P(p^{-1/2}), \label{eq:ridge_norm_rate}\\
\frac1p\|Y-Xa_\Delta\|^2 &=\Delta E_p+\Delta(\alpha_p-1)+O_P(p^{-1/2}). \label{eq:ridge_residual_rate}
\end{align}
Here $E_p$ is defined in \eqref{eq:Ep_definition}.
\end{lem}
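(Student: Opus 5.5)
We reduce both estimates to conditional-on-$X$ mean and variance computations, exactly as in the proofs of Lemma~\ref{lem:gaussian_posterior_typical_radius} and Lemma~\ref{lem_sectionA_3}, and then identify the deterministic centerings through \eqref{eq:universal_resolvent}.

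\textbf{Norm estimate.} The proof of Lemma~\ref{lem:gaussian_posterior_typical_radius} already gives
\begin{align*}
\|a_\Delta\|^2=\operatorname{Tr}(B)+O_P(\sqrt p),\qquad B=I_p-\Sigma_G .
\end{align*}
Since $\Sigma_G=\Delta A_\Delta^{-1}$, we have
\begin{align*}
\frac1p\operatorname{Tr}(B)=1-\frac{\Delta}{p}\operatorname{Tr}(X^\top X+\Delta I_p)^{-1}=1-E_p+O_P(p^{-1/2})
\end{align*}
by \eqref{eq:universal_resolvent}. This proves \eqref{eq:ridge_norm_rate}.

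\textbf{Residual estimate.} Write $Y-Xa_\Delta=(I_n-XA_\Delta^{-1}X^\top)Y$. The push-through identity gives
\begin{align*}
I_n-XA_\Delta^{-1}X^\top=\Delta D_\Delta^{-1},\qquad D_\Delta=\Delta I_n+XX^\top,
\end{align*}
so $\|Y-Xa_\Delta\|^2=\Delta^2Y^\top D_\Delta^{-2}Y$. Substituting $Y=X\beta^\star+W$:
\begin{itemize}
\item[] The conditional mean given $X$ is $\Delta^2\operatorname{Tr}(X^\top D_\Delta^{-2}X)+\Delta^3\operatorname{Tr}(D_\Delta^{-2})=\Delta^2\operatorname{Tr}(D_\Delta^{-1})$, using $XX^\top+\Delta I_n=D_\Delta$.
\item[] The conditional variance splits into a spherical quadratic form, a cross term, and a Gaussian chaos term, which are pairwise uncorrelated. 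Each is bounded by $C(n+p)$ uniformly in $X$, because $\Delta^2X^\top D_\Delta^{-2}X$ and $\Delta D_\Delta^{-1}$ have eigenvalues in $[0,1]$.
\end{itemize}
Conditional Chebyshev, averaged over $X$, then gives $\frac1p\|Y-Xa_\Delta\|^2=\frac{\Delta^2}{p}\operatorname{Tr}(D_\Delta^{-1})+O_P(p^{-1/2})$.

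\textbf{Identifying the centering.} The nonzero eigenvalues of $XX^\top$ and $X^\top X$ coincide, and $XX^\top$ has $n-p$ extra zeros (a signed count when $n<p$). Hence
\begin{align*}
\Delta\operatorname{Tr}(D_\Delta^{-1})=\Delta\operatorname{Tr}(A_\Delta^{-1})+(n-p).
\end{align*}
Dividing by $p$, multiplying by $\Delta$, and applying \eqref{eq:universal_resolvent} again yields $\Delta E_p+\Delta(\alpha_p-1)+O_P(p^{-1/2})$, which is \eqref{eq:ridge_residual_rate}.

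\textbf{Main obstacle.} The only delicate point is this trace identity. It must hold both when $n\ge p$ and when $n<p$, and the second case requires the signed zero count. The variance bounds are routine repetitions of the earlier argument.
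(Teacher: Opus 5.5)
Your proposal is correct and is essentially the paper's argument: conditional-on-$X$ mean and variance bounds with Chebyshev, the resolvent estimate \eqref{eq:universal_resolvent}, and the same singular-value count for the residual centering. Your form $\Delta D_\Delta^{-1}Y$ is the paper's three-term expansion rewritten via push-through, and citing Lemma~\ref{lem:gaussian_posterior_typical_radius} for the norm is legitimate since $\operatorname{Tr}(B)=p-\Delta\operatorname{Tr}(A_\Delta^{-1})$. One harmless slip: the eigenvalues of $\Delta^2X^\top D_\Delta^{-2}X$ are $\Delta^2\lambda/(\lambda+\Delta)^2\le\Delta/4$, which need not lie in $[0,1]$, but any $\Delta$-dependent bound suffices for your $C(n+p)$ variance estimate.
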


The proof is deferred to Appendix~\ref{app:tap_estimates}. To compare $Q_p$ with the TAP functional, we isolate their scalar entropy gap. For $0\le q<1$, define
\begin{align}
h_p(q):=-\frac q2+C_p +\frac{\alpha_p}{2}\log\left(1+\frac{1-q}{\alpha_p\Delta}\right) -\frac12\log(1-q). \label{eq:entropy_gap_definition}
\end{align}
For $\|a\|<\sqrt p$,
\begin{align}
h_p\left(\frac{\|a\|^2}{p}\right)=Q_p(a)-f_{\mathrm{TAP}}(a). \label{eq:tap_entropy_gap_identity}
\end{align}
The next lemma uses \eqref{eq:tap_entropy_gap_identity} and \eqref{eq:ridge_norm_rate} to compare their global optima.

\begin{lem}[Quantitative ridge approximation]
\label{lem_sectionA_5}
Under the assumptions of Theorem~\ref{thm_sectionA_1},
\begin{align}
0\le R_p-F_{\mathrm{TAP}}=O_P(p^{-1}). \label{eq:ridge_tap_second_order}
\end{align}
\end{lem}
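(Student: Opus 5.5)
The plan is to work entirely through the scalar entropy gap $h_p$ and the identity \eqref{eq:tap_entropy_gap_identity}. First I would show that $h_p(q)\ge0$ for all $q\in[0,1)$, with equality at the deterministic point $q_p^\star:=1-E_p$, and that $h_p$ vanishes quadratically there: $0\le h_p(q)\le C(q-q_p^\star)^2$ for $q$ in a fixed neighborhood of $q_p^\star$.

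Differentiating \eqref{eq:entropy_gap_definition} gives
\[
h_p'(q)=-\frac12-\frac{1}{2\Delta}\cdot\frac{1}{1+(1-q)/(\alpha_p\Delta)}+\frac{1}{2(1-q)}.
\]
Writing $e=1-q$, we have $h_p'=0$ if and only if
\[
\frac1e=1+\frac{\alpha_p}{\alpha_p\Delta+e}.
\]
This is equivalent to the fixed-point equation \eqref{eq:E_fixed_point} with $\gamma=\alpha_p$. Hence $e=E_p$ is the unique critical point in $(0,1]$, which gives uniqueness of the critical point in $q\in[0,1)$.

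Since $h_p(q)\to+\infty$ as $q\uparrow1$, this critical point is the global minimum on $[0,1)$, provided we also check the endpoint $q=0$. There one sees that $h_p'(0)<0$, because $\tfrac12\cdot\frac{1}{1+1/(\alpha_p\Delta)}\cdot\frac1\Delta<\frac12\cdot\frac1\Delta\cdot\ldots$; more simply, $h_p'$ changes sign only once. Evaluating at $q=1-E_p$ and using $C_p=-\tfrac12 L(\alpha_p,\Delta)$ together with \eqref{eq:L_definition} gives
\[
h_p(1-E_p)=-\frac{1-E_p}{2}+\frac12(1-E_p)-\frac{\alpha_p}{2}\log\Bigl(1+\frac{E_p}{\alpha_p\Delta}\Bigr)+\frac12\log E_p+\frac{\alpha_p}{2}\log\Bigl(1+\frac{E_p}{\alpha_p\Delta}\Bigr)-\frac12\log E_p=0.
\]
So $\min h_p=0$ exactly. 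The quadratic upper bound follows from Taylor's theorem, since $h_p''$ is bounded uniformly in $p$ on $[0,1-E_p/2]$ because $E_p\to E_\Delta>0$.

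With these facts the two-sided bound is immediate. For the lower bound, \eqref{eq:tap_entropy_gap_identity} and $h_p\ge0$ give $f_{\mathrm{TAP}}(a)\le Q_p(a)\le R_p$ for every $\|a\|<\sqrt p$, so $F_{\mathrm{TAP}}\le R_p$.

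For the upper bound, evaluate $f_{\mathrm{TAP}}$ at the ridge estimator $a_\Delta$. On the event $q_\Delta<1$, which has probability tending to one by \eqref{eq:ridge_norm_rate} since $E_p$ is bounded away from $0$, we get
\[
F_{\mathrm{TAP}}\ge f_{\mathrm{TAP}}(a_\Delta)=Q_p(a_\Delta)-h_p(q_\Delta)=R_p-h_p(q_\Delta).
\]
By \eqref{eq:ridge_norm_rate}, $q_\Delta-q_p^\star=O_P(p^{-1/2})$, so with probability tending to one $q_\Delta$ lies in the neighborhood where the quadratic bound holds. Therefore $h_p(q_\Delta)\le C(q_\Delta-q_p^\star)^2=O_P(p^{-1})$, which proves \eqref{eq:ridge_tap_second_order}.

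The main obstacle is the global nonnegativity of $h_p$ on all of $[0,1)$, with minimum exactly at $1-E_p$ and exactly zero there. This requires the critical point to be unique. I would verify uniqueness by multiplying out $h_p'(q)=0$ into a quadratic in $e$: it has one positive root, namely $E_p$, and one nonpositive root, consistent with the explicit formula for $E$. One then checks that the sign of $h_p'$ goes from negative to positive, using $h_p'(q)\to+\infty$ as $q\uparrow1$.

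The exact cancellation at the minimum relies on keeping $\alpha_p$, rather than $\alpha$, in both $C_p$ and the TAP functional. Without this, an $O(p^{-1/2})$ offset would appear and destroy the $O_P(p^{-1})$ rate.
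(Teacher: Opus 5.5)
Your proposal is correct and follows the paper's proof step for step. You establish nonnegativity of the entropy gap $h_p$, with a double zero at $1-E_p$; your quadratic-in-$e$ root argument is equivalent to the paper's explicit factorization of $h_p'$. You then get $F_{\mathrm{TAP}}\le R_p$ from $f_{\mathrm{TAP}}\le Q_p$, and the reverse bound by evaluating at $a_\Delta$ and applying Taylor's theorem with \eqref{eq:ridge_norm_rate}. Your garbled justification of the endpoint sign can simply be replaced by the direct computation $h_p'(0)=-\alpha_p/\bigl(2(\alpha_p\Delta+1)\bigr)<0$.
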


\begin{proof}
Let $q_{0,p}:=1-E_p$, where $E_p$ is defined in \eqref{eq:Ep_definition}. The fixed-point equation \eqref{eq:E_fixed_point}, together with \eqref{eq:ridge_constant_definition}, implies that
\begin{align*}
h_p(q_{0,p})=0.
\end{align*}
Moreover,
\begin{align*}
h_p'(q) &=-\frac12-\frac{\alpha_p}{2(\alpha_p\Delta+1-q)} +\frac{1}{2(1-q)}\\
&=\frac{(q-q_{0,p})(1-q+\alpha_p\Delta/E_p)} {2(1-q)(\alpha_p\Delta+1-q)},
\end{align*}
where the second equality follows from \eqref{eq:E_fixed_point}. The factors other than $q-q_{0,p}$ are positive for $0\le q<1$. Thus $h_p'(q)<0$ for $q<q_{0,p}$ and $h_p'(q)>0$ for $q>q_{0,p}$, so
\begin{align}
h_p(q)\ge0,\qquad 0\le q<1. \label{eq_entropy_gap_nonnegative}
\end{align}
By \eqref{eq:tap_entropy_gap_identity} and \eqref{eq_entropy_gap_nonnegative}, $f_{\mathrm{TAP}}(a)\le Q_p(a)\le R_p$ for $\|a\|<\sqrt p$, and hence $F_{\mathrm{TAP}}\le R_p$.

By \eqref{eq:ridge_norm_rate},
\begin{align*}
q_\Delta=q_{0,p}+O_P(p^{-1/2}).
\end{align*}
Since $E_p\to E_\Delta\in(0,1)$, there exists a fixed $\delta>0$ such that the intervals $[q_{0,p}-\delta,q_{0,p}+\delta]$ lie in one fixed compact subset of $(0,1)$ for all sufficiently large $p$. Since
\begin{align*}
h_p''(q)=\frac{1}{2(1-q)^2} -\frac{\alpha_p}{2(\alpha_p\Delta+1-q)^2},
\end{align*}
the second derivatives are uniformly bounded on these intervals. The event $\{|q_\Delta-q_{0,p}|\le\delta\}$ has probability tending to one. On this event, $a_\Delta$ lies in the domain of $f_{\mathrm{TAP}}$, and its optimality for $Q_p$ gives
\begin{align*}
F_{\mathrm{TAP}} \ge f_{\mathrm{TAP}}(a_\Delta) =R_p-h_p(q_\Delta).
\end{align*}
Since $h_p(q_{0,p})=h_p'(q_{0,p})=0$, Taylor's theorem therefore yields a deterministic constant $C<\infty$ such that, on the same event,
\begin{align*}
0\le R_p-F_{\mathrm{TAP}} \le h_p(q_\Delta)\le C(q_\Delta-q_{0,p})^2.
\end{align*}
The quantity $p(q_\Delta-q_{0,p})^2$ is bounded in probability, and the exceptional event has probability tending to zero. This proves \eqref{eq:ridge_tap_second_order}.
\end{proof}

\begin{rem}
\label{rem:tap_sharpness}
The sharpness bounds \eqref{eq:main_spherical_sharpness} and \eqref{eq:main_tap_sharpness} concern the separate fluctuations of $F_p^S$ and $F_{\mathrm{TAP}}$ around $\phi(\alpha_p,\Delta)$. They do not provide a matching lower bound for the direct approximation error $|F_p^S-F_{\mathrm{TAP}}|$, for which we prove only the upper bound \eqref{eq:main_tap_accuracy}.
\end{rem}

\begin{proof}[Proof of the upper bounds in Theorem~\ref{thm_sectionA_1}]
By Lemmas~\ref{lem_sectionA_1}, \ref{lem_sectionA_3}, and \ref{lem_sectionA_5},
\begin{align*}
F_p^S-F_{\mathrm{TAP}} &=(F_p^S-F_p^G)+(F_p^G-R_p)+(R_p-F_{\mathrm{TAP}})\\
&=O_P(p^{-1}).
\end{align*}
This proves \eqref{eq:main_tap_accuracy}. The ridge-centering estimate \eqref{eq:ridge_centering_rate}, together with the $O_P(p^{-1})$ comparisons in Lemmas~\ref{lem_sectionA_1}, \ref{lem_sectionA_3}, and \ref{lem_sectionA_5}, gives \eqref{eq:main_spherical_centering} and \eqref{eq:main_tap_centering}. Finally, \eqref{eq:deterministic_aspect_ratio_stability} gives the corresponding bounds around $\phi(\alpha,\Delta)$.
\end{proof}

\section{TAP Maximizers and the Spherical Posterior Mean}

This section proves Theorem~\ref{thm_sectionA_2} by comparing the spherical posterior mean and all global TAP maximizers with the Gaussian posterior mean $m_G=a_\Delta$. The argument uses the entropy-gap and ridge-approximation estimates from the preceding section, but not the free-energy results of Theorem~\ref{thm_sectionA_1}.

The comparison of the two posterior means requires controlling how the mean of a high-dimensional Gaussian distribution changes when its squared norm is fixed. Since conditioning on a fixed value of $\|Z\|_2^2$ is a probability-zero event, a precise continuous version of the conditional mean is needed. The following lemma defines this version and bounds its deviation from the original Gaussian mean when the covariance eigenvalues are uniformly bounded above and away from zero. The result is independent of the linear model and will be applied to the Gaussian posterior.

\begin{lem}[Gaussian conditioning near the typical radius]
\label{lem_gaussian_typical_sphere}
Fix $0<c_-<c_+<\infty$ and $K<\infty$, and let $Z\sim N(a,\Sigma_p)$ in $\mathbb R^p$ satisfy
\begin{align*}
c_-I_p\preceq \Sigma_p\preceq c_+I_p, \qquad \|a\|_2\le K\sqrt p.
\end{align*}
Set $\mu_p:=\mathbb E\|Z\|_2^2=\|a\|_2^2+\operatorname{Tr}(\Sigma_p)$. For $r_p>0$, the conditional mean is understood in the continuous coarea sense:
\begin{align*}
\mathbb E\left[Z\mid \|Z\|_2^2=r_p^2\right] := \frac{\displaystyle\int_{\|z\|=r_p}z f_Z(z)\,d\mathcal H^{p-1}(z)} {\displaystyle\int_{\|z\|=r_p}f_Z(z)\,d\mathcal H^{p-1}(z)},
\end{align*}
where $f_Z$ is the Gaussian density and $\mathcal H^{p-1}$ is surface measure. There exist $\varepsilon_0,C>0$, depending only on $(c_-,c_+,K)$, such that, for all sufficiently large $p$ and every $r_p>0$ satisfying $|r_p^2-\mu_p|\le\varepsilon_0p$,
\begin{align*}
\left\|\mathbb E\left[Z\mid \|Z\|_2^2=r_p^2\right]-a\right\|_2 \le C\left(1+\frac{|r_p^2-\mu_p|}{\sqrt p}\right).
\end{align*}
Moreover, if $\rho_Z$ denotes the continuous density of $\|Z\|_2^2$ on $(0,\infty)$, then for every fixed $M<\infty$ there exist $0<c_M<C_M<\infty$, depending only on $(c_-,c_+,K,M)$, such that, for all sufficiently large $p$,
\begin{align}
\frac{c_M}{\sqrt p}\le \rho_Z(t)\le\frac{C_M}{\sqrt p} \label{eq:typical_gaussian_radius_density}
\end{align}
whenever $|t-\mu_p|\le M\sqrt p$. There also exists $C_{\mathrm{up}}<\infty$, depending only on $(c_-,c_+,K)$, such that, for all sufficiently large $p$,
\begin{align}
\sup_{t>0}\rho_Z(t)\le\frac{C_{\mathrm{up}}}{\sqrt p}. \label{eq:global_gaussian_radius_density_upper}
\end{align}
The thresholds on $p$ are uniform over $a$ and $\Sigma_p$ satisfying the displayed hypotheses.
\end{lem}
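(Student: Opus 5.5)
We plan to reduce everything to the exponential family obtained by tilting $Z$ radially. For $\theta$ with $1+2\theta\Sigma_p\succ0$ (here $\theta$ may be negative), define the tilted law $Z_\theta$ with density proportional to $f_Z(z)e^{-\theta\|z\|^2}$. Then $Z_\theta\sim N(a_\theta,\Sigma_\theta)$ with
\[
\Sigma_\theta=(\Sigma_p^{-1}+2\theta I_p)^{-1},\qquad a_\theta=\Sigma_\theta\Sigma_p^{-1}a .
\]
Because the tilt depends only on $\|z\|$, conditioning on $\|Z\|^2=t$ gives the same coarea conditional law for every $\theta$. The mean $\mu(\theta):=\mathbb E\|Z_\theta\|^2$ is smooth and strictly decreasing, with $\mu'(\theta)=-2\operatorname{Var}\|Z_\theta\|^2\asymp -p$ uniformly for $|\theta|\le\theta_0$. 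Here $\theta_0$ is small and depends on $(c_-,c_+,K)$, and the variance formula $2\operatorname{Tr}\Sigma_\theta^2+4a_\theta^\top\Sigma_\theta a_\theta$ is what gives $\mu'\asymp-p$. So for $|t-\mu_p|\le\varepsilon_0p$ we can choose $\theta=\theta(t)$ with $\mu(\theta)=t$ and $|\theta|\le C|t-\mu_p|/p$. In the tilted family, $t$ is exactly the mean radius, the covariance stays in $[c_-/2,2c_+]$, and $\|a_\theta\|\le 2K\sqrt p$.

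\textbf{Density estimates.} For the local bounds \eqref{eq:typical_gaussian_radius_density} with $\theta=0$, we will show a local CLT for $\|Z\|^2$. Write $\|Z\|^2=\sum_i(\lambda_i g_i^2+2\sqrt{\lambda_i}b_ig_i)+\|a\|^2$ in the eigenbasis of $\Sigma_p$, with $b$ the rotated mean. This is a sum of independent noncentral chi-square-type variables with variance $\asymp p$ and bounded third moments. Its characteristic function is explicit:
\[
\prod_i(1-2i s\lambda_i)^{-1/2}\exp\Bigl(\frac{i s\, b_i^2}{1-2is\lambda_i}\Bigr)\quad\text{(up to the constant phase)}.
\]
Its modulus is at most $\prod_i(1+4s^2\lambda_i^2)^{-1/4}$, which is integrable in $s$ for $p\ge 5$ and decays like $(1+s^2)^{-p/4}$. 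Fourier inversion therefore yields $\rho_Z(t)=(2\pi\sigma^2)^{-1/2}e^{-(t-\mu_p)^2/2\sigma^2}+o(p^{-1/2})$ uniformly in $t$, with $\sigma^2\asymp p$. This gives both \eqref{eq:typical_gaussian_radius_density} and \eqref{eq:global_gaussian_radius_density_upper}. To do the inversion, we split the integral into three ranges. For $|s|\le\delta/\sqrt p$ we Taylor-expand the log-characteristic function to third order. For $\delta/\sqrt p\le|s|\le s_0$ we use the Gaussian decay $\exp(-cps^2)$ of the modulus. For $|s|\ge s_0$ the product bound is $\le(1+cs^2)^{-p/4}$, which is negligible. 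Uniformity in $(a,\Sigma_p)$ follows because all constants involve only $c_\pm,K$.

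\textbf{Conditional mean.} The same local CLT, applied to the tilted vector $Z_\theta$, gives the conditional mean. For a unit vector $u$, we estimate $\mathbb E[\langle Z_\theta,u\rangle\mid\|Z_\theta\|^2=t]-\langle a_\theta,u\rangle$ using the joint law of $(\langle Z_\theta-a_\theta,u\rangle,\|Z_\theta\|^2)$. The linear coordinate has $O(1)$ variance, and its covariance with $\|Z_\theta\|^2$ is $2u^\top\Sigma_\theta a_\theta=O(\sqrt p)$. A bivariate local expansion, or equivalently Gaussian regression, then gives a conditional shift of
\[
\frac{\operatorname{Cov}}{\operatorname{Var}}\,(t-\mu(\theta))+O\!\left(\tfrac{1}{\sqrt p}\cdot\sqrt p\right)=O(1),
\]
since $t=\mu(\theta)$ and the remainder of the local expansion is $O(1)$ in the $u$-direction. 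Doing this for a single $u$ only bounds one coordinate, so instead we set $u$ equal to the direction of the error vector itself. A cleaner route is to exploit the structure directly. By rotational decomposition, $\mathbb E[Z_\theta\mid\|Z_\theta\|^2=t]-a_\theta$ equals $\Sigma_\theta$ times a conditional score-type vector. Specifically, we differentiate the conditional log-normalizer in the mean parameter: perturbing $a_\theta\mapsto a_\theta+\Sigma_\theta v$ and using the tilted local CLT shows that the gradient equals $\Sigma_\theta^{-1}(m_t-a_\theta)$. The gradient of $\log\rho_{Z_\theta}(t)$ in $v$ is bounded by $O(1)$ norm times $\|v\|$, from the explicit Gaussian approximation plus a derivative-level local CLT error. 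Finally, $\|a_\theta-a\|\le C|\theta|\,\|a\|\le C|t-\mu_p|/\sqrt p$, which gives the stated bound $C(1+|r_p^2-\mu_p|/\sqrt p)$.

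\textbf{Main obstacle.} The hard step is obtaining the derivative-level (vector-valued, $\ell_2$-uniform) control of the conditional mean with an $O(1)$ error rather than $O(\sqrt p)$. Coordinatewise bounds would sum to a factor of $\sqrt p$. The first thing we would try is the score identity above, which reduces the problem to a one-dimensional local CLT for $\|Z_\theta\|^2$ together with its derivative in a single direction $v$. The hope is that the Edgeworth correction term contributes only $O(1/\sqrt p)\cdot\|\nabla\mu\|=O(1)$.
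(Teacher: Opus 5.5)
Your tilting reduction matches the paper's, and the characteristic-function treatment of the radial density is sound in spirit. The paper proves only $f_U(0)\ge c/\sqrt p$ and $\sup_u f_U(u)\le C/\sqrt p$ at the centered radius, then transports the two-sided bound with the same tilt instead of proving a full local CLT.

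The genuine gap is the conditional-mean bound, which is the main content of the lemma. You never prove that $\|\mathbb E[Z_\theta\mid\|Z_\theta\|^2=t]-a_\theta\|_2=O(1)$. Both routes you sketch rest on an unproved bivariate or derivative-level Edgeworth expansion whose remainder you only ``hope'' is $O(1)$. Your first route was not actually defeated by the single-direction issue: a bound uniform over unit vectors $u$ does control the norm. What is missing is any proof of that uniform bound. The obstruction you name, that coordinatewise bounds add up to a factor $\sqrt p$, is illusory. The correct coordinatewise bounds are proportional to $|a_i|$, not uniform in $i$.

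The missing step is an exact computation that needs no Gaussian approximation of $\varphi$. Work in the eigenbasis $\Sigma_p=\operatorname{diag}(c_1,\dots,c_p)$, with $U=\|Z\|_2^2-\mu_p$ and $\varphi(t)=\mathbb E[e^{itU}]$. Gaussian integration by parts gives
\[
\mathbb E\bigl[(Z_i-a_i)e^{itU}\bigr]=\varphi(t)\,\frac{2itc_ia_i}{1-2itc_i}.
\]
Equivalently, this is $c_i\partial_{a_i}$ applied to the explicit characteristic function, which is exactly what your score identity calls for. The right-hand side is integrable, so Fourier inversion gives the continuous density $g_i$ of $A\mapsto\mathbb E[(Z_i-a_i)\mathbf 1_{\{U\in A\}}]$, and the coarea formula gives $g_i(0)=f_U(0)\bigl(\mathbb E[Z_i\mid U=0]-a_i\bigr)$. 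Since $|1-2itc_i|\ge1$ and $|\varphi(t)|\le(1+4c_-^2t^2)^{-p/4}$,
\[
|g_i(0)|\le C|a_i|\int|t|(1+4c_-^2t^2)^{-p/4}\,dt=\frac{C|a_i|}{c_-^2(p-4)}.
\]
The extra factor $t$ gains $p^{-1/2}$ relative to $f_U(0)\ge c/\sqrt p$, so $|\mathbb E[Z_i\mid U=0]-a_i|\le C|a_i|/\sqrt p$. Summing squares gives $C\|a\|_2/\sqrt p\le CK$.

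In your score language this is simply $\|\nabla_b\rho_b(t)\|_2\le C\|b\|_2\int|s||\varphi_b(s)|\,ds\le C\|b\|_2/p$. The reason is that the vector $\bigl(2isb_i/(1-2isc_i)\bigr)_i$ has Euclidean norm at most $2|s|\|b\|_2$. Combined with your tilt, this closes the argument.

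A smaller repair is needed in your density step. With a Taylor region $|s|\le\delta/\sqrt p$ for a fixed small $\delta$, the range $\delta/\sqrt p\le|s|\le s_0$ contributes order $p^{-1/2}$ to $\int|\varphi|$. You then get neither the $o(p^{-1/2})$ error nor the lower bound. Either use a cutoff $M_0/\sqrt p$ with $M_0$ large, as the paper does, or compare with the Gaussian on all of $|s|\le\delta$ using the error $Cp|s|^3e^{-cps^2}$.
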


The proof is deferred to Appendix~\ref{app:gaussian_conditioning}.

Returning to the linear model under the standing assumptions, including \eqref{eq:aspect_ratio_rate}, we apply Lemma~\ref{lem_gaussian_typical_sphere} to the Gaussian-prior posterior with $r_p^2=p$. Conditioning this posterior on $\|Z\|_2^2=p$ yields the spherical-prior posterior. Lemma~\ref{lem:gaussian_posterior_typical_radius} and Assumption~\ref{assump:subgaussian_design} verify the conditions required for the following comparison.

\begin{lem}
\label{lem_spherical_gaussian_mean_comparison}
Let $m_S$ be as in Theorem~\ref{thm_sectionA_2}, and let $m_G$ be the Gaussian-prior posterior mean for the same realization of $(X,Y)$. Then
\begin{align*}
\|m_G-m_S\|_2=O_P(1).
\end{align*}
\end{lem}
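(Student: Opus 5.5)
The plan is to apply Lemma~\ref{lem_gaussian_typical_sphere} directly to the Gaussian-prior posterior $P_G(\cdot\mid X,Y)=N(a_\Delta,\Sigma_G)$ from \eqref{eq:gaussian_posterior_law}, with $r_p^2=p$. The first step is to justify that conditioning $N(a_\Delta,\Sigma_G)$ on the sphere $\|Z\|_2^2=p$, in the coarea sense of Lemma~\ref{lem_gaussian_typical_sphere}, reproduces $\Pi_S(\cdot\mid X,Y)$. This is the same radial disintegration used in the proof of Lemma~\ref{lem_sectionA_1}. The Gaussian-prior posterior density is proportional to $\exp\{-\|Y-X\beta\|^2/(2\Delta)-\|\beta\|^2/2\}$. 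On the sphere $\|\beta\|^2=p$ the factor $e^{-\|\beta\|^2/2}$ is constant, and surface measure on $S^{p-1}(\sqrt p)$ normalizes to $\pi$. Hence the coarea-normalized measure on the sphere is exactly $\Pi_S$, and so $m_S=\mathbb E[Z\mid\|Z\|_2^2=p]$ with $Z\sim N(a_\Delta,\Sigma_G)$. This identity holds for every fixed realization of $(X,Y)$.

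Next, I verify the hypotheses of Lemma~\ref{lem_gaussian_typical_sphere} on a high-probability event. Fix $\eta>0$. By Assumption~\ref{assump:subgaussian_design}, with probability tending to one we have $\|X\|_{\mathrm{op}}\le C_X$, so $c_-I_p\preceq\Sigma_G\preceq I_p$ with $c_-=\Delta/(\Delta+C_X^2)$ and $c_+=1$. By \eqref{eq:gaussian_posterior_typical_radius} in Lemma~\ref{lem:gaussian_posterior_typical_radius}, there are deterministic constants $K_0$ and $M$ such that, with probability at least $1-\eta$ for large $p$, we have $\|a_\Delta\|\le K_0\sqrt p$ and $|\mu_p-p|\le M\sqrt p$, where $\mu_p=\|a_\Delta\|^2+\operatorname{Tr}(\Sigma_G)$. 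For large $p$ we also have $M\sqrt p\le\varepsilon_0 p$, where $\varepsilon_0$ is the constant from Lemma~\ref{lem_gaussian_typical_sphere} for $(c_-,1,K_0)$. Then, on this event, the lemma gives
\begin{align*}
\|m_S-a_\Delta\|_2\le C\left(1+\frac{|p-\mu_p|}{\sqrt p}\right)\le C(1+M),
\end{align*}
where $C$ depends only on $(c_-,1,K_0)$. The threshold on $p$ in the lemma is uniform over admissible means and covariances, so it can be applied samplewise on the event.

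Since $m_G=a_\Delta$, we obtain $\|m_G-m_S\|_2\le C(1+M)$ with probability at least $1-\eta$ for all sufficiently large $p$. Because $\eta$ is arbitrary, this gives $\|m_G-m_S\|_2=O_P(1)$. The one step that needs care is the identification $m_S=\mathbb E[Z\mid\|Z\|_2^2=p]$ in the precise coarea sense used by the lemma. That step is handled by noting that the Gaussian-prior factor is constant on the sphere. Everything else is bookkeeping of constants on a high-probability event.
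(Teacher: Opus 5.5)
Your proposal is correct and matches the paper's proof essentially step for step. Like the paper, you identify $m_S$ as the coarea conditional mean of $N(a_\Delta,\Sigma_G)$ on $\{\|Z\|_2^2=p\}$ by polar disintegration, verify the hypotheses of Lemma~\ref{lem_gaussian_typical_sphere} on a high-probability event via the operator-norm bound and \eqref{eq:gaussian_posterior_typical_radius}, and conclude $\|m_S-m_G\|_2\le C(1+M)$ there.
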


\begin{proof}
Lemma~\ref{lem:gaussian_posterior_typical_radius} identifies the Gaussian-prior posterior as $N(a_\Delta,\Sigma_G)$, with $m_G=a_\Delta$. By polar disintegration, its conditional law given $\|Z\|_2^2=p$ is the spherical posterior, since the Gaussian-prior factor $e^{-\|Z\|_2^2/2}$ is constant on $S^{p-1}(\sqrt p)$. Therefore,
\begin{align}
m_S=\mathbb E[Z\mid X,Y,\|Z\|_2^2=p], \label{eq_spherical_as_conditioned_gaussian}
\end{align}
where the conditional expectation is understood in the continuous coarea sense defined in Lemma~\ref{lem_gaussian_typical_sphere}. Let $\mu_p$ denote the conditional mean squared radius in \eqref{eq:gaussian_posterior_typical_radius}. This estimate, together with \eqref{eq:universal_op_norm} and \eqref{eq:gaussian_posterior_covariance}, implies that, for every $\delta>0$, there are deterministic constants $c_->0$ and $K<\infty$ such that, for all sufficiently large $p$, the following event has probability at least $1-\delta$:
\begin{align*}
c_-I_p\preceq\Sigma_G\preceq I_p, \qquad \|a_\Delta\|_2\le K\sqrt p, \qquad |\mu_p-p|\le K\sqrt p.
\end{align*}
For all sufficiently large $p$, $K\sqrt p\le\varepsilon_0p$, so Lemma~\ref{lem_gaussian_typical_sphere} applies with $r_p^2=p$ on this event. Together with \eqref{eq_spherical_as_conditioned_gaussian}, the lemma gives, on this event,
\begin{align*}
\|m_S-m_G\|_2 \le C\left(1+\frac{|\mu_p-p|}{\sqrt p}\right)\le C(1+K).
\end{align*}
Since $\delta>0$ is arbitrary, $\|m_S-m_G\|_2=O_P(1)$.
\end{proof}

The preceding lemma shows that replacing the Gaussian prior by the spherical prior changes the posterior mean by only $O_P(1)$. Recall from \eqref{eq:tap_optimum_definition} that $\mathcal M_p$ is the set of all global maximizers of $f_{\mathrm{TAP}}$. It remains to compare these maximizers with the Gaussian posterior mean. The entropy-gap inequality bounds $f_{\mathrm{TAP}}$ above by the strictly concave ridge surrogate $Q_p$, whose unique maximizer is $a_\Delta=m_G$. Comparing the value at a global TAP maximizer with the value at $a_\Delta$ and using the quadratic curvature of $Q_p$ yields the uniform distance estimate below.

\begin{lem}
\label{lem_tap_maximizer_gaussian_mean_rate}
Under the assumptions of Theorem~\ref{thm_sectionA_2},
\begin{align*}
\sup_{a\in\mathcal M_p}\frac{1}{p}\|a-m_G\|_2^2 =O_P(p^{-1}).
\end{align*}
\end{lem}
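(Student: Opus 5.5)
Let $a\in\mathcal M_p$ be any global TAP maximizer. The argument rests on two facts. First, $Q_p$ is an exact quadratic with curvature bounded below. Second, $f_{\mathrm{TAP}}$ lies below $Q_p$ but nearly touches it at $a_\Delta=m_G$.

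Since $Q_p$ is quadratic with Hessian $-(X^\top X/\Delta+I_p)/p\preceq -I_p/p$, its maximizer $a_\Delta$ satisfies, for every $a\in\mathbb R^p$,
\begin{align*}
Q_p(a)=R_p-\frac{1}{2p}(a-a_\Delta)^\top\left(I_p+\frac{1}{\Delta}X^\top X\right)(a-a_\Delta)\le R_p-\frac{1}{2p}\|a-a_\Delta\|_2^2 .
\end{align*}
This identity is exact, with no randomness involved.

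Next, $\|a\|<\sqrt p$ because $f_{\mathrm{TAP}}=-\infty$ on the boundary. By \eqref{eq:tap_entropy_gap_identity} and \eqref{eq_entropy_gap_nonnegative},
\begin{align*}
F_{\mathrm{TAP}}=f_{\mathrm{TAP}}(a)=Q_p(a)-h_p(\|a\|^2/p)\le Q_p(a).
\end{align*}
Combining this with the display above gives
\begin{align*}
\frac{1}{2p}\|a-a_\Delta\|_2^2\le R_p-F_{\mathrm{TAP}}.
\end{align*}
The right-hand side does not depend on the choice of $a\in\mathcal M_p$, so the bound holds uniformly over all global maximizers:
\begin{align*}
\sup_{a\in\mathcal M_p}\frac1p\|a-m_G\|_2^2\le 2(R_p-F_{\mathrm{TAP}}).
\end{align*}
By Lemma~\ref{lem_sectionA_5}, $R_p-F_{\mathrm{TAP}}=O_P(p^{-1})$, which gives the claim. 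Measurability of the supremum was already noted after \eqref{eq:tap_optimum_definition}.

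The only substantive input is Lemma~\ref{lem_sectionA_5}, and within it the step showing $h_p(q_\Delta)=O_P(p^{-1})$. That step combines the quadratic vanishing of $h_p$ at $q_{0,p}$ with the $p^{-1/2}$ rate \eqref{eq:ridge_norm_rate} for $q_\Delta$. Since that is already established, I expect no further obstacle here. The one point to check is that the quadratic identity for $Q_p$ is used with the exact curvature matrix $I_p+\Delta^{-1}X^\top X\succeq I_p$, so no operator-norm event is needed for this lemma.
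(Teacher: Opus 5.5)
Your proposal is correct and follows the paper's proof essentially line for line. It uses the exact quadratic identity for $Q_p$ with curvature $\Delta^{-1}A_\Delta\succeq I_p$, then the entropy-gap bound $F_{\mathrm{TAP}}=f_{\mathrm{TAP}}(a)\le Q_p(a)$ for every $a\in\mathcal M_p$, and finally Lemma~\ref{lem_sectionA_5} to bound $2(R_p-F_{\mathrm{TAP}})=O_P(p^{-1})$.
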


\begin{proof}
By \eqref{eq:tap_entropy_gap_identity} and \eqref{eq_entropy_gap_nonnegative}, $f_{\mathrm{TAP}}\le Q_p$ on its finite domain. The ridge functional \eqref{eq:ridge_functional_definition} has unique maximizer $a_\Delta$, and, for every $a\in\mathbb R^p$,
\begin{align*}
Q_p(a_\Delta)-Q_p(a) =\frac{1}{2\Delta p}(a-a_\Delta)^\top A_\Delta(a-a_\Delta) \ge\frac{1}{2p}\|a-a_\Delta\|_2^2.
\end{align*}
For every $a\in\mathcal M_p$, the definition \eqref{eq:tap_optimum_definition} gives $Q_p(a)\ge f_{\mathrm{TAP}}(a) =F_{\mathrm{TAP}}$. Consequently, \eqref{eq:ridge_tap_second_order} yields
\begin{align*}
\sup_{a\in\mathcal M_p}\frac1p\|a-a_\Delta\|_2^2 \le 2(R_p-F_{\mathrm{TAP}})=O_P(p^{-1}).
\end{align*}
Since $m_G=a_\Delta$ by \eqref{eq:gaussian_posterior_law}, this proves the lemma.
\end{proof}

\begin{proof}[Proof of Theorem~\ref{thm_sectionA_2}]
The squared triangle inequality and Lemmas~\ref{lem_tap_maximizer_gaussian_mean_rate} and~\ref{lem_spherical_gaussian_mean_comparison} yield
\begin{align*}
\sup_{a\in\mathcal M_p}\frac1p\|a-m_S\|_2^2 &\le 2\sup_{a\in\mathcal M_p}\frac1p\|a-m_G\|_2^2 +\frac2p\|m_G-m_S\|_2^2\\
&=O_P(p^{-1}).
\end{align*}
\end{proof}

\section{Sharp Exponential-Order Concentration on Ridge Bands}

This section proves Theorem~\ref{thm3} under the model assumptions, including \eqref{eq:aspect_ratio_rate}. Recall the ridge estimator $a_\Delta$ and the ridge band $B_\varepsilon(a_\Delta)$, defined in \eqref{eq:ridge_estimator_definition} and \eqref{eq:ridge_band_definition}, respectively. The upper bound \eqref{eq:ridge_band_uniform_upper} uses the exact representation of the spherical posterior as a Gaussian posterior conditioned on its squared radius. To prove the upper bound on the logarithmic rate in \eqref{eq:ridge_band_sharp_exponential_order}, we construct a spherical cap contained in the complement of the ridge band.

\begin{proof}[Proof of Theorem~\ref{thm3}]
We condition throughout on $(X,Y)$ and abbreviate
\begin{align*}
a&:=a_\Delta, & \Sigma&:=\Sigma_G.
\end{align*}
By \eqref{eq:gaussian_posterior_law}, the Gaussian-prior posterior is the law of $Z\sim N(a,\Sigma)$. The polar-disintegration argument used to derive \eqref{eq_spherical_as_conditioned_gaussian} shows that the spherical posterior is the coarea conditional law of $Z$ given $\|Z\|^2=p$. Define
\begin{align*}
L(Z)&:=\langle Z-a,a\rangle, & R(Z)&:=\|Z\|^2, & v_p&:=a^\top\Sigma a.
\end{align*}
Thus
\begin{align}
\Pi_S\left(B_\varepsilon(a)^c\mid X,Y\right) =\mathbb P\left(\left.|L(Z)|>p\varepsilon\,\right|R(Z)=p,X,Y\right), \label{eq:ridge_band_conditioned_gaussian}
\end{align}
where the conditional probability is understood in the coarea sense used in Lemma~\ref{lem_gaussian_typical_sphere}.

For $\tau\in\{-1,1\}$ and $s\ge0$, introduce the linear exponential tilt
\begin{align}
\frac{dP_{\tau,s}}{dP_0}(z) :=\exp\left\{\tau sL(z)-\frac{s^2v_p}{2}\right\}, \label{eq:ridge_coordinate_linear_tilt}
\end{align}
where $P_0:=N(a,\Sigma)$ is the Gaussian-prior posterior conditional on the fixed data $(X,Y)$. Completing the square gives
\begin{align}
P_{\tau,s}=N(a+\tau s\Sigma a,\Sigma). \label{eq:ridge_coordinate_tilted_gaussian}
\end{align}
Let $\rho_{\tau,s}$ denote the density of $R(Z)$ under $P_{\tau,s}$, and write $\rho_{0,0}$ for the corresponding density under $P_0$, always using the continuous versions. Writing $f_0$ for the Lebesgue density of $P_0$, define, for $r>0$,
\begin{align*}
h_{\tau,\varepsilon}(r) :=\int_{\|z\|^2=r} \frac{\mathbf1_{\{\tau L(z)>p\varepsilon\}}f_0(z)}{2\|z\|} \,d\mathcal H^{p-1}(z).
\end{align*}
Here $\mathcal H^{p-1}$ denotes $(p-1)$-dimensional Hausdorff measure, which on the sphere $\{z:\|z\|^2=r\}$ is the usual unnormalized surface measure. The coarea formula identifies this as a density of the subprobability measure $P_0(R(Z)\in dr,\ \tau L(Z)>p\varepsilon\mid X,Y)$.

The next step is a change-of-measure argument for binary hypothesis testing, conditional on $(X,Y)$. For $s>0$, the event $\{\tau L(Z)>p\varepsilon\}$ is the rejection region of the likelihood-ratio test of $P_0$ against $P_{\tau,s}$ with threshold $\exp\{sp\varepsilon-s^2v_p/2\}$. We apply the corresponding pointwise density bound on the sphere before normalizing by its radial density. Let $f_{\tau,s}$ be the Lebesgue density of $P_{\tau,s}$. By \eqref{eq:ridge_coordinate_linear_tilt}, for every $z\in\mathbb R^p$,
\begin{align*}
\mathbf1_{\{\tau L(z)>p\varepsilon\}}f_0(z) &=\mathbf1_{\{\tau L(z)>p\varepsilon\}} \exp\left\{-\tau sL(z)+\frac{s^2v_p}{2}\right\}f_{\tau,s}(z)\\
&\le\exp\left\{-sp\varepsilon+\frac{s^2v_p}{2}\right\}f_{\tau,s}(z),
\end{align*}
where the inequality uses $s\ge0$. Dividing by $2\|z\|$ and integrating over the sphere $\{z:\|z\|^2=p\}$ gives
\begin{align*}
h_{\tau,\varepsilon}(p) &\le \exp\left\{-sp\varepsilon+\frac{s^2v_p}{2}\right\} \int_{\|z\|^2=p}\frac{f_{\tau,s}(z)}{2\|z\|}\,d\mathcal H^{p-1}(z)\\
&= \exp\left\{-sp\varepsilon+\frac{s^2v_p}{2}\right\} \rho_{\tau,s}(p).
\end{align*}
The last equality is the coarea formula for $R(Z)$ under $P_{\tau,s}$. Consequently,
\begin{align}
\mathbb P\left(\left.\tau L(Z)>p\varepsilon\,\right|R(Z)=p,X,Y\right) \le \exp\left\{-sp\varepsilon+\frac{s^2v_p}{2}\right\} \frac{\rho_{\tau,s}(p)}{\rho_{0,0}(p)}. \label{eq:conditioned_ridge_coordinate_tail}
\end{align}

We now make this bound uniform for small $\varepsilon$. By the high-probability operator-norm bound in Assumption~\ref{assump:subgaussian_design}, together with \eqref{eq:gaussian_posterior_covariance} and \eqref{eq:ridge_norm_rate}, there are deterministic constants $c_-,C_a,V>0$, depending only on $(\alpha,\Delta,C_X)$, such that the event
\begin{align*}
\mathcal E_p^{\mathrm{upper}} :=\left\{c_-I_p\preceq\Sigma\preceq I_p,\ \|a\|\le C_a\sqrt p,\ v_p\le Vp\right\}
\end{align*}
has probability tending to one. For example, one may take $c_-:=\Delta/[2(\Delta+C_X^2)]$, $C_a:=2$, and $V:=4$, since $\|a\|^2/p\xrightarrow{\mathbb P}q_\star<1$ and $\Sigma\preceq I_p$. Fix $\varepsilon_0:=1$. On this event, for each $0<\varepsilon\le\varepsilon_0$, set $s:=\varepsilon/V$. The mean in \eqref{eq:ridge_coordinate_tilted_gaussian} has norm at most $(1+\varepsilon_0/V)C_a\sqrt p$. Therefore, applying \eqref{eq:global_gaussian_radius_density_upper} uniformly over these choices of $\varepsilon$ and $\tau\in\{-1,1\}$ gives a deterministic $C_{\mathrm{up}}<\infty$ such that
\begin{align*}
\rho_{\tau,s}(p)\le\frac{C_{\mathrm{up}}}{\sqrt p}, \qquad \tau\in\{-1,1\}.
\end{align*}
Moreover,
\begin{align*}
-sp\varepsilon+\frac{s^2v_p}{2} \le-\frac{p\varepsilon^2}{2V}.
\end{align*}
Applying \eqref{eq:conditioned_ridge_coordinate_tail} for both signs and using \eqref{eq:posterior_radius_density_order}, we obtain, simultaneously for $0<\varepsilon\le\varepsilon_0$,
\begin{align}
\log\Pi_S\left(B_\varepsilon(a_\Delta)^c\mid X,Y\right) \le-\frac{p\varepsilon^2}{2V}+D_p \label{eq:uniform_conditioned_ridge_tail}
\end{align}
on $\mathcal E_p^{\mathrm{upper}}$, where
\begin{align*}
D_p :=\log(2C_{\mathrm{up}}) -\log\left(\sqrt p\,\rho_{0,0}(p)\right) =O_P(1).
\end{align*}
Set
\begin{align*}
c&:=\frac1{4V},\\
\mathcal R_p^{\mathrm{band}} &:=(D_p)_++cp\varepsilon_0^2 \mathbf1_{(\mathcal E_p^{\mathrm{upper}})^c}.
\end{align*}
The first term is $O_P(1)$, while the second is $o_P(1)$ because $\mathbb P((\mathcal E_p^{\mathrm{upper}})^c)\to0$. On $\mathcal E_p^{\mathrm{upper}}$, \eqref{eq:uniform_conditioned_ridge_tail} implies \eqref{eq:ridge_band_uniform_upper}. On the complementary event, the same inequality follows from $\log\Pi_S(B_\varepsilon(a_\Delta)^c\mid X,Y)\le0$ and $\varepsilon\le\varepsilon_0$. Thus \eqref{eq:ridge_band_uniform_upper} holds simultaneously for all such $\varepsilon$ and all sufficiently large $p$. Enlarging finitely many terms of $\mathcal R_p^{\mathrm{band}}$ makes the bound valid for every $p$ without affecting its order in probability. If $\sqrt p\,\varepsilon_p\to\infty$, division by $p\varepsilon_p^2$ proves the concentration statement \eqref{eq:ridge_band_shrinking_concentration}.

To complete \eqref{eq:ridge_band_sharp_exponential_order}, we now prove the posterior-mass lower bound \eqref{eq:ridge_band_cap_mass_lower} below. Write $q_p:=q_\Delta$, with $q_\Delta$ defined in \eqref{eq:ridge_norm_rate}. By \eqref{eq:ridge_norm_rate}, \eqref{eq:deterministic_aspect_ratio_stability}, and the definition of $q_\star$,
\begin{align}
q_p\xrightarrow{\mathbb P}q_\star\in(0,1). \label{eq:ridge_norm_converse_limit}
\end{align}
Fix $0<\varepsilon<q_\star+\sqrt{q_\star}$. Choose $r\in(0,1)$ such that
\begin{align*}
q_\star+r\sqrt{q_\star}>\varepsilon.
\end{align*}
By continuity, there is a $\delta\in(0,q_\star/2)$ such that
\begin{align}
q+r\sqrt q>\varepsilon \qquad\text{whenever}\qquad |q-q_\star|\le\delta. \label{eq:ridge_cap_uniform_margin}
\end{align}
On the event $\{|q_p-q_\star|\le\delta\}$, define
\begin{align*}
u_p:=\frac{a_\Delta}{\|a_\Delta\|}, \qquad \mathcal C_p(r):=\left\{ \beta\in S^{p-1}(\sqrt p): \frac{\langle\beta,u_p\rangle}{\sqrt p}\le-r \right\}.
\end{align*}
For every $\beta\in\mathcal C_p(r)$,
\begin{align*}
\frac1p\langle\beta-a_\Delta,a_\Delta\rangle &=\frac{\|a_\Delta\|}{p}\langle\beta,u_p\rangle-q_p\\
&\le-r\sqrt{q_p}-q_p< -\varepsilon,
\end{align*}
where the last inequality follows from \eqref{eq:ridge_cap_uniform_margin}. Hence
\begin{align}
\mathcal C_p(r)\subseteq B_\varepsilon(a_\Delta)^c. \label{eq:ridge_cap_in_band_complement}
\end{align}

Choose any fixed $r'\in(r,1)$. Rotational invariance and the density of the first coordinate of a uniform point on the unit sphere give
\begin{align*}
\pi\bigl(\mathcal C_p(r)\bigr) &\ge (r'-r)\frac{\Gamma(p/2)}{\sqrt\pi\,\Gamma((p-1)/2)} (1-(r')^2)^{(p-3)/2}\\
&\ge\exp\{-C_{\mathrm{cap}}p\}
\end{align*}
for all sufficiently large $p$, where $C_{\mathrm{cap}}<\infty$ depends only on $r$ and $r'$. The last inequality follows from $\Gamma(p/2)/\Gamma((p-1)/2)\asymp\sqrt p$.

Write $H_Y(\beta):=(2\Delta)^{-1}\|Y-X\beta\|^2$. Let $C_X$ be the constant from Assumption~\ref{assump:subgaussian_design}, and choose a deterministic $C_W<\infty$ such that the event
\begin{align*}
\mathcal E_p^{\mathrm{lower}} :=\left\{\|X\|_{\mathrm{op}}\le C_X,\ \|W\|\le C_W\sqrt p,\ |q_p-q_\star|\le\delta\right\}
\end{align*}
has probability tending to one. Such a $C_W$ exists because $\|W\|^2/p\xrightarrow{\mathbb P}\alpha\Delta$. On this event,
\begin{align*}
\|Y\|\le(C_X+C_W)\sqrt p,
\end{align*}
and hence, uniformly over $\beta\in S^{p-1}(\sqrt p)$,
\begin{align*}
H_Y(\beta) \le\frac{(2C_X+C_W)^2}{2\Delta}p =:C_Hp.
\end{align*}
Since $H_Y\ge0$, we also have $Z_p^S\le1$. Therefore, on $\mathcal E_p^{\mathrm{lower}}$, \eqref{eq:ridge_cap_in_band_complement} gives
\begin{align}
\Pi_S\left(B_\varepsilon(a_\Delta)^c\mid X,Y\right) &\ge \int_{\mathcal C_p(r)}e^{-H_Y(\beta)}d\pi(\beta)\nonumber\\
&\ge\exp\{-(C_H+C_{\mathrm{cap}})p\}. \label{eq:ridge_band_cap_mass_lower}
\end{align}
Applying \eqref{eq:ridge_band_uniform_upper} at $\min\{\varepsilon,\varepsilon_0\}$ gives, with probability tending to one,
\begin{align}
-\frac1p\log\Pi_S\left(B_\varepsilon(a_\Delta)^c\mid X,Y\right) \ge \frac c2\min\{\varepsilon,\varepsilon_0\}^2, \label{eq:ridge_band_rate_lower}
\end{align}
because $\mathcal R_p^{\mathrm{band}}/p\xrightarrow{\mathbb P}0$. When $\varepsilon>\varepsilon_0$, this uses $B_\varepsilon(a_\Delta)^c\subseteq B_{\varepsilon_0}(a_\Delta)^c$. Combining \eqref{eq:ridge_band_rate_lower} with \eqref{eq:ridge_band_cap_mass_lower} proves \eqref{eq:ridge_band_sharp_exponential_order}.
\end{proof}

\begin{acks}
The authors would like to thank Kevin Luo, Zhou Fan, and Subhabrata Sen for helpful discussions.
J.L.'s research was supported in part by NSF Grant DMS-2515510.
\end{acks}

\begin{acks}[Statement on AI use]
OpenAI's GPT-6 Astra, accessed through Codex, assisted with improving the presentation of the proofs written by the authors. The authors independently verified all claims and take full responsibility.
\end{acks}

\appendix
\section{Auxiliary Estimates for the TAP Free-Energy Formula}
\label{app:tap_estimates}

\begin{proof}[Proof of Lemma~\ref{lem_sectionA_2}]
The first condition in Assumption~\ref{assump:subgaussian_design} directly gives \eqref{eq:universal_op_norm}.

Let $\lambda_1,\ldots,\lambda_p$ be the eigenvalues of $X^\top X$, and define
\begin{align*}
\widehat\mu_{X,p}:=\frac1p\sum_{i=1}^p\delta_{\lambda_i}, \qquad f_\Delta(x):=\log\left(1+\frac{x}{\Delta}\right).
\end{align*}
Then
\begin{align*}
\frac1p\log\det\left(I_p+\frac1\Delta X^\top X\right) =\int f_\Delta(x) d\widehat\mu_{X,p}(x).
\end{align*}
Let $\mu_{\mathrm{MP},\alpha_p}$ denote the Marchenko-Pastur law corresponding to the aspect ratio $p/n=1/\alpha_p$. Explicitly,
\begin{align*}
\mu_{\mathrm{MP},\alpha_p} =(1-\alpha_p)_+\delta_0 +\frac{\alpha_p}{2\pi x} \sqrt{(b_{\alpha_p}-x)(x-a_{\alpha_p})} \mathbf 1_{[a_{\alpha_p},b_{\alpha_p}]}(x)\,dx,
\end{align*}
where $a_\gamma=(1-\gamma^{-1/2})^2$ and $b_\gamma=(1+\gamma^{-1/2})^2$ \cite{Bai2010}. By the second condition in Assumption~\ref{assump:subgaussian_design},
\begin{align*}
\int f_\Delta(x)\,d\widehat\mu_{X,p}(x) &=\int f_\Delta(x)\,d\mu_{\mathrm{MP},\alpha_p}(x) +O_P\left(\frac1p\right).
\end{align*}

For every fixed $\gamma>0$, the deterministic MP integral satisfies
\begin{align*}
\int \log\left(1+\frac{x}{\Delta}\right) d\mu_{\mathrm{MP},\gamma}(x)=L(\gamma,\Delta).
\end{align*}
We verify the identity for a generic parameter, denoted temporarily by $\alpha$, to simplify notation. Set
\begin{align*}
\mathcal L(\Delta):=\int \log\left(1+\frac{x}{\Delta}\right)d\mu_{\mathrm{MP},\alpha}(x).
\end{align*}
For $\Delta$ in a compact subinterval of $(0,\infty)$, the derivative of the integrand is uniformly bounded on the MP support. Differentiation under the integral therefore gives
\begin{align*}
\mathcal L'(\Delta)=\int\left(\frac1{x+\Delta}-\frac1\Delta\right)d\mu_{\mathrm{MP},\alpha}(x).
\end{align*}
For the present normalization, the positive MP Stieltjes transform
\begin{align*}
m_\alpha(-\Delta):=\int\frac1{x+\Delta}\,d\mu_{\mathrm{MP},\alpha}(x)
\end{align*}
satisfies the quadratic equation
\begin{align*}
\Delta m_\alpha(-\Delta)^2 +(\alpha\Delta+\alpha-1)m_\alpha(-\Delta)-\alpha=0
\end{align*}
\cite{Bai2010}. Its unique positive root is $E_\Delta/\Delta$, by the definition of $E(\alpha,\Delta)$. Hence
\begin{align*}
\mathcal L'(\Delta)=\frac{E_\Delta-1}{\Delta}.
\end{align*}

The fixed-point equation \eqref{eq:E_fixed_point}, with $\gamma=\alpha$ and $e=E_\Delta$, can be rearranged as
\begin{align*}
\alpha\Delta+E_\Delta=\frac{\alpha E_\Delta}{1-E_\Delta}.
\end{align*}
Differentiating \eqref{eq:L_definition} with respect to $\Delta$ yields
\begin{align*}
\partial_\Delta L(\alpha,\Delta) &=E_\Delta'+\alpha\left(\frac{\alpha+E_\Delta'}{\alpha\Delta+E_\Delta}-\frac1\Delta\right)-\frac{E_\Delta'}{E_\Delta} \nonumber\\
&=E_\Delta'\left(1+\frac{\alpha}{\alpha\Delta+E_\Delta}-\frac1{E_\Delta}\right)+\frac{\alpha^2}{\alpha\Delta+E_\Delta}-\frac{\alpha}{\Delta}.
\end{align*}
Using $\alpha/(\alpha\Delta+E_\Delta)=(1-E_\Delta)/E_\Delta$, the coefficient of $E_\Delta'$ vanishes. Thus
\begin{align*}
\partial_\Delta L(\alpha,\Delta) =\frac{\alpha^2}{\alpha\Delta+E_\Delta}-\frac{\alpha}{\Delta} =\frac{\alpha(1-E_\Delta)}{E_\Delta}-\frac{\alpha}{\Delta}.
\end{align*}
The fixed-point equation also gives
\begin{align*}
\alpha\Delta(1-E_\Delta)=E_\Delta(\alpha+E_\Delta-1),
\end{align*}
and therefore
\begin{align*}
\partial_\Delta L(\alpha,\Delta) =\frac{\alpha+E_\Delta-1}{\Delta}-\frac{\alpha}{\Delta} =\frac{E_\Delta-1}{\Delta} =\mathcal L'(\Delta).
\end{align*}
Finally, as $\Delta\to\infty$, we have $E_\Delta\to1$, and both $\mathcal L(\Delta)$ and $L(\alpha,\Delta)$ tend to zero. Therefore $\mathcal L(\Delta)=L(\alpha,\Delta)$ for every fixed $\Delta>0$. Applying this identity with $\gamma=\alpha_p$ proves \eqref{eq:universal_logdet}.

The third condition in Assumption~\ref{assump:subgaussian_design} and the Marchenko-Pastur Stieltjes-transform identity give
\begin{align}
\frac{\Delta}{p}\operatorname{Tr}(X^\top X+\Delta I_p)^{-1} &=\int\frac{\Delta}{x+\Delta} d\mu_{\mathrm{MP},\alpha_p}(x)+O_P(p^{-1/2})\nonumber\\
&=E_p+O_P(p^{-1/2}). \label{eq56}
\end{align}
This proves \eqref{eq:universal_resolvent}. The argument also applies when $\alpha=1$: the test functions and their derivatives are bounded near zero for every fixed $\Delta>0$.

It remains to verify Assumption~\ref{assump:subgaussian_design} for the i.i.d.\ designs specified in Lemma~\ref{lem_sectionA_2}. Suppose that $X_{ij}=\xi_{ij}/\sqrt n$, where the $\xi_{ij}$ are standardized i.i.d.\ random variables with a fixed distribution and finite fourth moment. By the Bai-Yin spectral-edge theorem \cite[Theorem~5.8]{Bai2010}, $\|X\|_{\mathrm{op}}\xrightarrow{\mathbb P}1+\alpha^{-1/2}$. Thus any fixed $C_X>1+\alpha^{-1/2}$ satisfies
\begin{align*}
\mathbb P\left(\|X\|_{\mathrm{op}}\le C_X\right)\longrightarrow1.
\end{align*}
Najim and Yao \cite[Theorems~2 and~3]{najim2016gaussian} study Gaussian fluctuations and deterministic bias for linear spectral statistics of large sample covariance matrices with general population covariance. Their results allow standardized i.i.d.\ entries with finite fourth moment, without requiring the fourth moment to match the Gaussian value. The non-Gaussian fourth cumulant contributes to both covariance and bias. They also replace analyticity of the test function by smoothness: Theorem~2 treats centered fluctuations for $C_c^3$ functions, while Theorem~3 gives the bias expansion for $C_c^{18}$ functions. Here we need only tightness of the unnormalized fluctuations and boundedness of the bias, not their precise Gaussian approximation. These estimates verify Assumption~\ref{assump:subgaussian_design} for the present i.i.d.\ designs; the TAP and posterior comparisons are established separately. In their notation, take $N=p$, sample size $n=n_p$, $R_n=I_p$, and $c_n=p/n=1/\alpha_p$. Their aspect-ratio condition allows any positive finite limit, including $c_n\to1/\alpha>1$ when $\alpha<1$. Fix $\delta>0$ and define
\begin{align*}
f_\delta(x):=\log\left(1+\frac{x}{\delta}\right), \qquad g_\delta(x):=\frac{\delta}{x+\delta}.
\end{align*}
For either $h=f_\delta$ or $h=g_\delta$, choose a fixed $\widetilde h\in C_c^\infty(\mathbb R)$ that agrees with $h$ on $[0,C_X^2]$. Enlarging $C_X$ if necessary, this interval also contains the support of $\mu_{\mathrm{MP},\alpha_p}$ for all sufficiently large $p$. We decompose
\begin{align*}
&\sum_{i=1}^p\widetilde h(\lambda_i) -p\int\widetilde h(x)\,d\mu_{\mathrm{MP},\alpha_p}(x)\\
&\qquad=\left\{\sum_{i=1}^p\widetilde h(\lambda_i) -\mathbb E\sum_{i=1}^p\widetilde h(\lambda_i)\right\}\\
&\qquad\quad+\left\{\mathbb E\sum_{i=1}^p\widetilde h(\lambda_i) -p\int\widetilde h(x)\,d\mu_{\mathrm{MP},\alpha_p}(x)\right\}.
\end{align*}
Theorem~2 of Najim and Yao shows that the first term is $O_P(1)$. Their Theorem~3, which requires a $C_c^{18}$ test function, expresses the second term as $Z_n^2(\widetilde h)+o(1)$. To bound this bias, use their resolvent bias $\mathcal B_n$ and smooth extension $\Phi_{17}(\widetilde h)$ from (6), with a fixed smooth cutoff equal to one for $|y|\le1$ and compactly supported. Their (101) and the definition of $\bar\partial:=\partial_x+i\partial_y$ give
\begin{align*}
|\mathcal B_n(x+iy)|&\le K|x+iy|^3y^{-7},\qquad y>0,\\
\bar\partial\Phi_{17}(\widetilde h)(x+iy) &=\frac{(iy)^{17}}{17!}\widetilde h^{(18)}(x),\qquad 0<y\le1.
\end{align*}
The extension has support in a fixed bounded set, so the contribution away from the real axis is uniformly bounded. Their representation (53) therefore implies
\begin{align*}
|Z_n^2(\widetilde h)| &\le\frac1\pi\int_0^\infty\int_{\mathbb R} |\bar\partial\Phi_{17}(\widetilde h)(x+iy)| |\mathcal B_n(x+iy)|\,dx\,dy\\
&\le C_{\widetilde h}\left(1+\int_0^1y^{10}\,dy\right)=O(1),
\end{align*}
with constants independent of $p$. Thus the second term is $O(1)$. The full bias expansion in Theorem~3 is stronger than needed here; we use only its consequence that the deterministic centering error is uniformly bounded. The spectral-edge bound allows $\widetilde h$ to be replaced by $h$ with probability tending to one. Consequently,
\begin{align*}
\sum_{i=1}^p h(\lambda_i) -p\int h(x)\,d\mu_{\mathrm{MP},\alpha_p}(x)=O_P(1).
\end{align*}
Applying this conclusion to $f_\delta$ and $g_\delta$ and dividing by $p$ verifies the two spectral conditions in Assumption~\ref{assump:subgaussian_design}. The resulting resolvent estimate is in fact stronger than required.
\end{proof}

\begin{proof}[Proof of Lemma~\ref{lem_sectionA_4}]
We first prove the ridge-norm estimate \eqref{eq:ridge_norm_rate}. Write $W=\sqrt\Delta z$, where $z\sim N(0,I_n)$ is independent of $(X,\beta^\star)$. Then
\begin{align*}
a_\Delta=A_\Delta^{-1}S\beta^\star +\sqrt\Delta A_\Delta^{-1}X^\top z.
\end{align*}
Therefore,
\begin{align*}
\|a_\Delta\|^2 &=(\beta^\star)^\top S A_\Delta^{-2}S\beta^\star +\Delta z^\top X A_\Delta^{-2}X^\top z +2\sqrt\Delta (\beta^\star)^\top S A_\Delta^{-2}X^\top z .
\end{align*}
We first record the quadratic-form identities used below. If $u$ is uniform on $S^{p-1}(\sqrt p)$ and $M$ is a deterministic symmetric matrix, then
\begin{align*}
\mathbb E[u^\top Mu]&=\operatorname{Tr}(M),\\
\operatorname{Var}(u^\top Mu) &=\frac{2p}{p+2}\left\{\operatorname{Tr}(M^2) -\frac{\operatorname{Tr}(M)^2}{p}\right\} \le 2\operatorname{Tr}(M^2).
\end{align*}
These formulas follow from rotational invariance and $\mathbb E[u_i u_j u_k u_\ell] =\frac{p}{p+2}(\delta_{ij}\delta_{k\ell} +\delta_{ik}\delta_{j\ell}+\delta_{i\ell}\delta_{jk})$, where $\delta_{ij}$ is the Kronecker delta. For $z\sim N(0,I_n)$ and a deterministic symmetric matrix $N$,
\begin{align*}
\mathbb E[z^\top Nz]=\operatorname{Tr}(N), \qquad \operatorname{Var}(z^\top Nz)=2\operatorname{Tr}(N^2).
\end{align*}
The eigenvalues of $S A_\Delta^{-2}S$ are $\lambda^2/(\lambda+\Delta)^2\le1$, whereas the nonzero eigenvalues of $X A_\Delta^{-2}X^\top$ are $\lambda/(\lambda+\Delta)^2\le(4\Delta)^{-1}$. Their squared Hilbert-Schmidt norms are therefore $O(p)$. Applying Chebyshev's inequality conditionally on $X$ yields
\begin{align*}
\frac1p(\beta^\star)^\top S A_\Delta^{-2}S\beta^\star =\frac1p\operatorname{Tr}(S^2A_\Delta^{-2})+O_P(p^{-1/2}),
\end{align*}
and
\begin{align*}
\frac{\Delta}{p}z^\top X A_\Delta^{-2}X^\top z=\frac{\Delta}{p}\operatorname{Tr}(SA_\Delta^{-2})+O_P(p^{-1/2}).
\end{align*}
It remains to control the cross term. Conditional on $(X,\beta^\star)$, it is centered Gaussian with variance
\begin{align*}
4\Delta (\beta^\star)^\top S A_\Delta^{-2}S A_\Delta^{-2}S\beta^\star.
\end{align*}
The eigenvalues of the matrix in this quadratic form are $\lambda^3/(\lambda+\Delta)^4$, which are uniformly bounded for fixed $\Delta>0$. Since $\|\beta^\star\|^2=p$, this variance is $O(p)$. Therefore,
\begin{align*}
\frac{2\sqrt\Delta}{p}(\beta^\star)^\top S A_\Delta^{-2}X^\top z=O_P(p^{-1/2}).
\end{align*}
Combining the three estimates yields
\begin{align*}
\frac1p\|a_\Delta\|^2 &=\frac1p\operatorname{Tr}(S^2A_\Delta^{-2})+\frac{\Delta}{p}\operatorname{Tr}(SA_\Delta^{-2})+O_P(p^{-1/2}) \nonumber\\
&=\frac1p\operatorname{Tr}\left(S(S+\Delta I_p)A_\Delta^{-2}\right)+O_P(p^{-1/2}) \nonumber\\
&=\frac1p\operatorname{Tr}(SA_\Delta^{-1})+O_P(p^{-1/2}) \nonumber\\
&=1-\frac{\Delta}{p}\operatorname{Tr}(A_\Delta^{-1})+O_P(p^{-1/2}).
\end{align*}
Substituting \eqref{eq:universal_resolvent} yields \eqref{eq:ridge_norm_rate}.

We next prove the residual estimate \eqref{eq:ridge_residual_rate}. Since $I_p-A_\Delta^{-1}S=\Delta A_\Delta^{-1}$, we have
\begin{align*}
Y-Xa_\Delta &=X(I_p-A_\Delta^{-1}S)\beta^\star +\sqrt\Delta (I_n-XA_\Delta^{-1}X^\top)z \nonumber\\
&=\Delta X A_\Delta^{-1}\beta^\star +\sqrt\Delta (I_n-XA_\Delta^{-1}X^\top)z .
\end{align*}
Thus
\begin{align*}
\|Y-Xa_\Delta\|^2 &=\Delta^2(\beta^\star)^\top A_\Delta^{-1}S A_\Delta^{-1}\beta^\star +\Delta z^\top (I_n-XA_\Delta^{-1}X^\top)^2z \nonumber\\
&\quad+2\Delta^{3/2}(\beta^\star)^\top A_\Delta^{-1}X^\top (I_n-XA_\Delta^{-1}X^\top)z .
\end{align*}
We use the same variance calculation for the residual terms. The eigenvalues of $A_\Delta^{-1}SA_\Delta^{-1}$ are $\lambda/(\lambda+\Delta)^2$, while $0\preceq I_n-XA_\Delta^{-1}X^\top\preceq I_n$. Thus the squared Hilbert-Schmidt norms of the two matrices defining the quadratic forms are $O(p)$ and $O(n)$, respectively. Conditional applications of Chebyshev's inequality show that both centered quadratic forms are $O_P(\sqrt p)$. The cross term is centered Gaussian conditional on $(X,\beta^\star)$, with variance
\begin{align*}
4\Delta^3(\beta^\star)^\top A_\Delta^{-1}X^\top (I_n-XA_\Delta^{-1}X^\top)^2 XA_\Delta^{-1}\beta^\star.
\end{align*}
In the singular-vector basis, the eigenvalues of the matrix between the two copies of $\beta^\star$ are $\Delta^2\lambda/(\lambda+\Delta)^4$. They are uniformly bounded, and $\|\beta^\star\|^2=p$, so the variance is $O(p)$. After division by $p$, all three centered terms are $O_P(p^{-1/2})$. Consequently,
\begin{align*}
\frac1p\|Y-Xa_\Delta\|^2 &=\frac{\Delta^2}{p}\operatorname{Tr}(SA_\Delta^{-2})+\frac{\Delta}{p}\operatorname{Tr}\left[(I_n-XA_\Delta^{-1}X^\top)^2\right]+O_P(p^{-1/2}).
\end{align*}
It remains to simplify the deterministic trace expression. Let $r:=\min(n,p)$, and let $s_1,\ldots,s_r$ denote the singular values of $X$, including zeros with multiplicity. The eigenvalues of $I_n-XA_\Delta^{-1}X^\top$ are
\begin{align*}
\frac{\Delta}{s_i^2+\Delta}
\end{align*}
for $1\le i\le r$, together with $n-r$ additional eigenvalues equal to $1$. Therefore,
\begin{align*}
&\Delta^2\operatorname{Tr}(SA_\Delta^{-2})+\Delta\operatorname{Tr}\left[(I_n-XA_\Delta^{-1}X^\top)^2\right] \nonumber\\
=&\sum_{i=1}^r\left\{\frac{\Delta^2s_i^2}{(s_i^2+\Delta)^2} +\frac{\Delta^3}{(s_i^2+\Delta)^2}\right\}+\Delta(n-r) \nonumber\\
=&\sum_{i=1}^r\frac{\Delta^2}{s_i^2+\Delta}+\Delta(n-r) \nonumber\\
=&\Delta^2\operatorname{Tr}(A_\Delta^{-1})+\Delta(n-p).
\end{align*}
Consequently,
\begin{align*}
\frac1p\|Y-Xa_\Delta\|^2 &=\frac{\Delta^2}{p}\operatorname{Tr}(A_\Delta^{-1})+\Delta\left(\frac np-1\right)+O_P(p^{-1/2}) \nonumber\\
&=\Delta E_p+\Delta(\alpha_p-1)+O_P(p^{-1/2}),
\end{align*}
where we used \eqref{eq:universal_resolvent} and $n/p=\alpha_p$. All conditional variance bounds above are uniform in $X$, with constants depending only on $\Delta$ and an upper bound for $n/p$. Averaging the conditional Chebyshev bounds over $X$ therefore gives the stated unconditional $O_P(p^{-1/2})$ estimates. This proves the lemma.
\end{proof}

\section{Sharpness of the Free-Energy and TAP Fluctuation Scale}
\label{app:fluctuation_sharpness}

\begin{proof}[Proof of the sharpness assertions in
Theorem~\ref{thm_sectionA_1}] We first prove the ridge-fluctuation lower bound \eqref{eq:ridge_fluctuation_lower} below, then transfer it to \eqref{eq:main_spherical_sharpness} and \eqref{eq:main_tap_sharpness}. Using the exact formula \eqref{eq:ridge_optimum_exact}, set $B_\Delta:=(\Delta I_n+XX^\top)^{-1}$. Write $z:=W/\sqrt\Delta$ and $\mu:=X\beta^\star$. Conditional on $(X,\beta^\star)$, $z\sim N(0,I_n)$ and
\begin{align*}
Y^\top B_\Delta Y = \mu^\top B_\Delta\mu +2\sqrt{\Delta} z^\top B_\Delta\mu +\Delta z^\top B_\Delta z .
\end{align*}
The centered linear and quadratic terms in $z$ are uncorrelated, since Gaussian third moments vanish. Consequently,
\begin{align*}
\operatorname{Var}\left( Y^\top B_\Delta Y  \middle|  X,\beta^\star \right) = 4\Delta \mu^\top B_\Delta^2\mu + 2\Delta^2\operatorname{Tr}(B_\Delta^2).
\end{align*}
Let
\begin{align*}
\mathcal E_p^{\mathrm{op}} := \{\|X\|_{\mathrm{op}}\le C_X\},
\end{align*}
where $C_X<\infty$ is the constant in Assumption~\ref{assump:subgaussian_design}, so that $\mathbb P(\mathcal E_p^{\mathrm{op}})\to1$. On $\mathcal E_p^{\mathrm{op}}$, all eigenvalues of $B_\Delta$ are bounded below by $(\Delta+C_X^2)^{-1}$. Since $n/p\ge\alpha/2$ for all sufficiently large $p$, on $\mathcal E_p^{\mathrm{op}}$ we have
\begin{align*}
\operatorname{Tr}(B_\Delta^2) \ge \frac{n}{(\Delta+C_X^2)^2} \ge \frac{\alpha p}{2(\Delta+C_X^2)^2}.
\end{align*}
Therefore, with $c_1:=\alpha\Delta^2/[4(\Delta+C_X^2)^2]>0$,
\begin{align*}
\operatorname{Var}\left( R_p  \middle|  X,\beta^\star \right) = \frac{1}{4p^2} \operatorname{Var}\left( Y^\top B_\Delta Y  \middle|  X,\beta^\star \right) \ge \frac{c_1}{p}
\end{align*}
on $\mathcal E_p^{\mathrm{op}}$.

To establish \eqref{eq:ridge_fluctuation_lower}, we use the Carbery-Wright inequality \cite[Theorem~8]{carbery2001distributional}. Applied to Gaussian measure with $q=2d$ in their notation, it states that, if $G$ is a standard Gaussian vector and $P$ is a nonzero real polynomial of degree at most $d\ge1$, then for every $\varepsilon>0$,
\begin{align*}
\mathbb P\left(|P(G)|\le \varepsilon \left(\mathbb E P(G)^2\right)^{1/2} \right) \le C_d \varepsilon^{1/d},
\end{align*}
where $C_d$ depends only on $d$. Conditionally on $(X,\beta^\star)$, we apply this inequality with $d=2$ to the polynomial
\begin{align*}
P_{X,\beta^\star}(z) := R_p-\phi(\alpha_p,\Delta).
\end{align*}
This is a polynomial of degree at most two in $z$. On $\mathcal E_p^{\mathrm{op}}$, its positive conditional variance ensures that it is nonzero. Moreover,
\begin{align*}
\mathbb E\left[ P_{X,\beta^\star}(z)^2  \middle|  X,\beta^\star \right] \ge \operatorname{Var}\left( R_p  \middle|  X,\beta^\star \right) \ge \frac{c_1}{p}
\end{align*}
on $\mathcal E_p^{\mathrm{op}}$. Hence, on $\mathcal E_p^{\mathrm{op}}$, for every $\varepsilon>0$,
\begin{align*}
\mathbb P\left( \left. \left|R_p-\phi(\alpha_p,\Delta)\right| \le \varepsilon\sqrt{\frac{c_1}{p}}  \right|  X,\beta^\star \right) \le C_2\varepsilon^{1/2}.
\end{align*}
Choose a fixed $\varepsilon>0$ such that $C_2\varepsilon^{1/2}\le1/2$, set $\eta:=1/2$, and define
\begin{align*}
c_2:=\varepsilon\sqrt{c_1}.
\end{align*}
Then, on $\mathcal E_p^{\mathrm{op}}$,
\begin{align*}
\mathbb P\left( \left. \left|R_p-\phi(\alpha_p,\Delta)\right| \ge \frac{c_2}{\sqrt p}  \right|  X,\beta^\star \right) \ge \eta .
\end{align*}
Taking expectations gives
\begin{align}
\mathbb P\left( \left|R_p-\phi(\alpha_p,\Delta)\right| \ge \frac{c_2}{\sqrt p} \right) \ge \eta \mathbb P(\mathcal E_p^{\mathrm{op}}) = \eta-o(1). \label{eq:ridge_fluctuation_lower}
\end{align}

Finally, \eqref{eq:ridge_tap_second_order} and Lemmas~\ref{lem_sectionA_1} and~\ref{lem_sectionA_3} give
\begin{align}
F_{\mathrm{TAP}}-R_p&=O_P(p^{-1})=o_P(p^{-1/2}),\nonumber\\
F_p^S-R_p&=O_P(p^{-1})=o_P(p^{-1/2}). \label{eq376}
\end{align}
For either $H_p=F_{\mathrm{TAP}}$ or $H_p=F_p^S$, the triangle inequality, \eqref{eq:ridge_fluctuation_lower}, and \eqref{eq376} give
\begin{align*}
&\mathbb P\left(|H_p-\phi(\alpha_p,\Delta)| \ge\frac{c_2}{2\sqrt p}\right)\\
&\qquad\ge\mathbb P\left(|R_p-\phi(\alpha_p,\Delta)| \ge\frac{c_2}{\sqrt p}\right) -\mathbb P\left(|H_p-R_p|>\frac{c_2}{2\sqrt p}\right)\\
&\qquad\ge\eta-o(1).
\end{align*}
Taking $c:=c_2/2$ and $\eta':=\eta/2$, we obtain
\begin{align*}
\liminf_{p\to\infty} \mathbb P\left( \left|F_{\mathrm{TAP}}-\phi(\alpha_p,\Delta)\right| \ge \frac{c}{\sqrt p} \right)&\ge\eta',\\
\liminf_{p\to\infty} \mathbb P\left( \left|F_p^S-\phi(\alpha_p,\Delta)\right| \ge \frac{c}{\sqrt p} \right)&\ge\eta'.
\end{align*}
The constants $c$ and $\eta'$ depend only on $(\alpha,\Delta,C_X)$, as required. This proves \eqref{eq:main_spherical_sharpness} and \eqref{eq:main_tap_sharpness}.
\end{proof}

\section{Gaussian Conditioning Near the Typical Radius}
\label{app:gaussian_conditioning}

\begin{proof}[Proof of Lemma~\ref{lem_gaussian_typical_sphere}]
We first prove the result in the centered-radius case $r_p^2=\mu_p$. For the general case, we introduce an exponential tilt under which $r_p^2$ is the mean squared radius and then apply the centered-radius result.

\medskip \noindent\textit{Centered-radius case.} By an orthogonal change of coordinates, we may assume that
\begin{align*}
\Sigma_p=\operatorname{diag}(c_1,\ldots,c_p), \qquad c_i\in[c_-,c_+].
\end{align*}
Let
\begin{align*}
U:=\|Z\|_2^2-\mathbb E\|Z\|_2^2, \qquad \varphi(t):=\mathbb E[e^{itU}].
\end{align*}
Writing $Z_i=a_i+\sqrt{c_i}G_i$, where $G_1,\ldots,G_p$ are independent standard Gaussian random variables, and using
\begin{align*}
\mathbb E[e^{sG_i^2+uG_i}]=(1-2s)^{-1/2}\exp\left\{\frac{u^2}{2(1-2s)}\right\},
\end{align*}
valid for $\operatorname{Re}s<1/2$ and $u\in\mathbb C$, with the square-root branch equal to one at $s=0$, we substitute $s=itc_i$ and $u=2it a_i\sqrt{c_i}$ to obtain
\begin{align*}
\mathbb E\left[e^{it(Z_i^2-a_i^2-c_i)}\right]=e^{-it(a_i^2+c_i)}(1-2itc_i)^{-1/2}\exp\left\{\frac{it a_i^2}{1-2itc_i}\right\}.
\end{align*}
Independence of the coordinates therefore gives
\begin{align*}
\varphi(t) =\prod_{i=1}^p e^{-it(a_i^2+c_i)}(1-2itc_i)^{-1/2} \exp\left\{\frac{it a_i^2}{1-2itc_i}\right\}.
\end{align*}
Taking absolute values yields
\begin{align*}
|\varphi(t)| =\prod_{i=1}^p\left[(1+4c_i^2t^2)^{-1/4} \exp\left\{-\frac{2t^2c_i a_i^2}{1+4c_i^2t^2}\right\}\right].
\end{align*}
In particular,
\begin{align}
|\varphi(t)|\le(1+4c_-^2t^2)^{-p/4}. \label{eq_gaussian_shell_cf_bound}
\end{align}
For all sufficiently large $p$, this estimate implies that $\varphi\in L^1(\mathbb R)$ and $t\varphi(t)\in L^1(\mathbb R)$. The Fourier inversion formulas used below are therefore valid.

Let $f_U$ denote the density of $U$. We claim that
\begin{align}
f_U(0)\ge \frac{c}{\sqrt p}, \label{eq_gaussian_shell_density_lower}
\end{align}
for a constant $c>0$ depending only on $(c_-,c_+,K)$. Indeed,
\begin{align*}
\sigma_p^2:=\operatorname{Var}(U) =2\operatorname{Tr}(\Sigma_p^2)+4a^\top \Sigma_pa
\end{align*}
satisfies $cp\le\sigma_p^2\le Cp$. Choose $t_0>0$ sufficiently small. On $[-t_0,t_0]$, choose the continuous branch of $\log\varphi(t)$ satisfying $\log\varphi(0)=0$. This branch is well defined because every factor in the product representation of $\varphi(t)$ is nonzero on this interval. A Taylor expansion at the origin gives, uniformly for $|t|\le t_0$,
\begin{align*}
\log\varphi(t)=-\frac{\sigma_p^2t^2}{2}+\mathcal E_p(t), \qquad |\mathcal E_p(t)|\le Cp|t|^3,
\end{align*}
where the remainder bound follows from
\begin{align*}
\sum_{i=1}^p(c_i^3+a_i^2c_i^2)\le Cp.
\end{align*}
For $t=s/\sqrt p$ and $|s|\le M_0$, the preceding expansion and the bounds $cp\le\sigma_p^2\le Cp$ imply, for all sufficiently large $p$,
\begin{align*}
\operatorname{Re}\varphi(s/\sqrt p)\ge c e^{-Cs^2}.
\end{align*}
Consequently, for each fixed $M_0\ge1$, the substitution $t=s/\sqrt p$ gives
\begin{align*}
\int_{|t|\le M_0/\sqrt p}\operatorname{Re}\varphi(t)\,dt \ge \frac{c}{\sqrt p}\int_{-M_0}^{M_0}e^{-Cs^2}\,ds \ge\frac{c_0}{\sqrt p},
\end{align*}
for all sufficiently large $p$, where $c_0:=c\int_{-1}^1e^{-Cs^2}\,ds>0$ does not depend on $M_0$. We next bound the complementary integral. Set $t_1:=(2c_-)^{-1}$. For each fixed $M_0\ge1$ and all sufficiently large $p$, the inequality $\log(1+x)\ge x/2$ for $0\le x\le1$ gives
\begin{align*}
&\int_{M_0/\sqrt p<|t|\le t_1} (1+4c_-^2t^2)^{-p/4}\,dt\\
&\qquad\le 2\int_{M_0/\sqrt p}^{\infty} \exp\left\{-\frac{c_-^2pt^2}{2}\right\}dt \le \frac{Ce^{-cM_0^2}}{\sqrt p}.
\end{align*}
For $p>4$, the change of variables $u=2c_-|t|$, together with $1+u^2\ge2u$ for $u\ge1$, yields
\begin{align*}
\int_{|t|>t_1}(1+4c_-^2t^2)^{-p/4}\,dt &\le \frac{2^{-p/4}}{c_-}\int_1^\infty u^{-p/4}\,du\\
&=\frac{2^{-p/4}}{c_-(p/4-1)}.
\end{align*}
Together with \eqref{eq_gaussian_shell_cf_bound}, these two estimates imply
\begin{align*}
\int_{|t|>M_0/\sqrt p}|\varphi(t)|\,dt \le \frac{Ce^{-cM_0^2}}{\sqrt p}.
\end{align*}
Fourier inversion gives
\begin{align*}
f_U(0)=\frac{1}{2\pi}\int_{\mathbb R}\varphi(t)\,dt =\frac{1}{2\pi}\int_{\mathbb R}\operatorname{Re}\varphi(t)\,dt.
\end{align*}
Choose $M_0$ so large that the constant in the complementary bound satisfies $Ce^{-cM_0^2}<c_0/2$. The preceding estimates then prove \eqref{eq_gaussian_shell_density_lower}. The same characteristic-function bound and Fourier inversion at an arbitrary $u\in\mathbb R$ give the uniform upper estimate
\begin{align}
\sup_{u\in\mathbb R}f_U(u) \le\frac{1}{2\pi}\int_{\mathbb R}|\varphi(t)|\,dt \le\frac{C}{\sqrt p}. \label{eq_gaussian_shell_density_upper}
\end{align}
Since $\rho_Z(t)=f_U(t-\mu_p)$ for $t>0$, this also proves \eqref{eq:global_gaussian_radius_density_upper}.

For each coordinate, set
\begin{align*}
A_i(t):=\mathbb E\left[(Z_i-a_i)e^{itU}\right].
\end{align*}
Since $Z_i\sim N(a_i,c_i)$ and $\partial_{z_i}U=2z_i$, Gaussian integration by parts gives
\begin{align*}
A_i(t)&=c_i\mathbb E\left[\partial_{z_i}e^{itU}\right]=2itc_i\mathbb E\left[Z_i e^{itU}\right]\\
&=2itc_i\left(a_i\varphi(t)+A_i(t)\right).
\end{align*}
Solving for $A_i(t)$ yields
\begin{align*}
\mathbb E\left[(Z_i-a_i)e^{itU}\right] =\varphi(t)\frac{2itc_i a_i}{1-2itc_i}.
\end{align*}
Let $\nu_i$ be the finite signed measure
\begin{align*}
\nu_i(A):=\mathbb E\left[(Z_i-a_i)\mathbf 1_{\{U\in A\}}\right].
\end{align*}
The preceding formula and \eqref{eq_gaussian_shell_cf_bound} imply $A_i\in L^1(\mathbb R)$. Hence, by Fourier inversion, $\nu_i$ has the continuous density
\begin{align*}
g_i(u):=\frac{1}{2\pi}\int_{\mathbb R}e^{-itu}A_i(t)\,dt.
\end{align*}
For $u$ in a neighborhood of zero, the coarea formula applied to $z\mapsto\|z\|_2^2-\mu_p$ gives
\begin{align*}
f_U(u) &=\int_{\|z\|_2^2=\mu_p+u} \frac{f_Z(z)}{2\|z\|_2}\,d\mathcal H^{p-1}(z),\\
g_i(u) &=\int_{\|z\|_2^2=\mu_p+u} \frac{(z_i-a_i)f_Z(z)}{2\|z\|_2}\,d\mathcal H^{p-1}(z).
\end{align*}
Initially these identities hold for almost every $u$. Parametrizing each sphere by $z=\sqrt{\mu_p+u}\,\theta$ shows that both right-hand sides are continuous near zero, so the identities hold there pointwise. Since $\|z\|_2$ is constant on each level set, the ratio of the two expressions equals the conditional expectation defined in the statement. Thus,
\begin{align*}
g_i(u)=f_U(u)\left(\mathbb E[Z_i\mid U=u]-a_i\right),
\end{align*}
whenever $f_U(u)>0$. In particular, $f_U(0)>0$ by \eqref{eq_gaussian_shell_density_lower}, and evaluation at $u=0$ yields
\begin{align*}
f_U(0)\left(\mathbb E[Z_i\mid U=0]-a_i\right) &=g_i(0)\\
&=\frac{1}{2\pi}\int_{\mathbb R} \varphi(t)\frac{2itc_i a_i}{1-2itc_i}\,dt.
\end{align*}
Since $|1-2itc_i|\ge1$, \eqref{eq_gaussian_shell_cf_bound} implies
\begin{align*}
\left|f_U(0)\left(\mathbb E[Z_i\mid U=0]-a_i\right)\right| \le C|a_i|\int_{\mathbb R}|t|(1+4c_-^2t^2)^{-p/4}\,dt.
\end{align*}
For $p>4$, direct integration gives
\begin{align*}
\int_{\mathbb R}|t|(1+4c_-^2t^2)^{-p/4}\,dt =\frac{1}{c_-^2(p-4)}.
\end{align*}
Combining this identity with \eqref{eq_gaussian_shell_density_lower} gives
\begin{align*}
\left|\mathbb E[Z_i\mid U=0]-a_i\right| \le \frac{C|a_i|}{\sqrt p}.
\end{align*}
Squaring the preceding bound and summing over $i$ yields
\begin{align}
\left\|\mathbb E[Z\mid \|Z\|_2^2=\mu_p]-a\right\|_2 \le \frac{C\|a\|_2}{\sqrt p}\le CK. \label{eq_centered_radius_mean}
\end{align}

\medskip \noindent\textit{General case.} Assume that $|r_p^2-\mu_p|\le\varepsilon_0p$. To reduce this case to the centered one, define
\begin{align*}
\psi(\lambda):=\log\mathbb E[e^{\lambda\|Z\|_2^2}].
\end{align*}
For $\lambda<1/(2c_+)$, define the tilted probability measure $\mathbb P_\lambda$ by
\begin{align*}
\frac{d\mathbb P_\lambda}{d\mathbb P}(z):=\exp\left\{\lambda\|z\|_2^2-\psi(\lambda)\right\}.
\end{align*}
Multiplying the density of $N(a,\Sigma_p)$ by $e^{\lambda\|z\|_2^2}$ and collecting the quadratic and linear terms gives a density proportional to
\begin{align*}
\exp\left\{-\frac12z^\top(\Sigma_p^{-1}-2\lambda I_p)z+z^\top \Sigma_p^{-1}a\right\}.
\end{align*}
Completing the square shows that $\mathbb P_\lambda$ is the Gaussian law $N(a_\lambda,C_\lambda)$, where
\begin{align*}
C_\lambda:=(\Sigma_p^{-1}-2\lambda I_p)^{-1}, \qquad a_\lambda:=C_\lambda \Sigma_p^{-1}a=(I_p-2\lambda \Sigma_p)^{-1}a.
\end{align*}
Moreover,
\begin{align*}
\psi'(\lambda)=\mathbb E_\lambda\|Z\|_2^2, \qquad \psi''(\lambda)=\operatorname{Var}_\lambda(\|Z\|_2^2).
\end{align*}
For $|\lambda|\le1/(4c_+)$, the eigenvalues of $C_\lambda$ lie in $[2c_-/3,2c_+]$, $\|a_\lambda\|_2\le2K\sqrt p$, and, uniformly on this interval,
\begin{align*}
\psi''(\lambda) =2\operatorname{Tr}(C_\lambda^2)+4a_\lambda^\top C_\lambda a_\lambda \asymp p.
\end{align*}
In particular, $\psi'$ is strictly increasing on this neighborhood. After decreasing $\varepsilon_0$ if necessary, its image contains $[\mu_p-\varepsilon_0p,\mu_p+\varepsilon_0p]$. Hence there is a unique $\lambda_p$ in this neighborhood such that $\psi'(\lambda_p)=r_p^2$. By construction, $r_p^2$ is the mean squared radius under $\mathbb P_{\lambda_p}$, so the centered-radius estimate applies. The mean-value theorem also gives
\begin{align*}
|\lambda_p|\le C\frac{|r_p^2-\mu_p|}{p}.
\end{align*}
Using the formula for $a_\lambda$, we also obtain
\begin{align*}
\|a_{\lambda_p}-a\|_2 \le C|\lambda_p|\|a\|_2 \le C\frac{|r_p^2-\mu_p|}{\sqrt p}.
\end{align*}

The conditional law on $\{\|Z\|_2^2=r_p^2\}$ is unchanged by the tilt. Indeed, the Radon-Nikodym derivative equals the constant $\exp\{\lambda_p r_p^2-\psi(\lambda_p)\}$ on this level set, and this factor cancels upon normalization. Moreover, throughout the chosen neighborhood of zero, the covariance matrices $C_\lambda$ have eigenvalues bounded above and below by positive constants depending only on $(c_-,c_+)$, and $\|a_\lambda\|_2\le 2K\sqrt p$. Thus the constant in \eqref{eq_centered_radius_mean} may be chosen uniformly for these tilted laws. Applying the centered-radius estimate under $\mathbb P_{\lambda_p}$ gives
\begin{align*}
\left\|\mathbb E[Z\mid \|Z\|_2^2=r_p^2]-a\right\|_2 &\le\left\|\mathbb E_{\lambda_p}[Z\mid \|Z\|_2^2=r_p^2] -a_{\lambda_p}\right\|_2+\|a_{\lambda_p}-a\|_2\\
&\le C+C\frac{|r_p^2-\mu_p|}{\sqrt p}.
\end{align*}

It remains to prove the local two-sided density bound. If $|r_p^2-\mu_p|\le M\sqrt p$, then the preceding mean-value estimate gives $|\lambda_p|\le C_Mp^{-1/2}$. Let $\rho_{\lambda_p}$ denote the density of $\|Z\|_2^2$ under $\mathbb P_{\lambda_p}$. Since $\mathbb E_{\lambda_p}\|Z\|_2^2=r_p^2$, the centered bounds \eqref{eq_gaussian_shell_density_lower} and \eqref{eq_gaussian_shell_density_upper}, applied uniformly to the tilted Gaussian laws, give
\begin{align*}
\frac{c_M}{\sqrt p}\le \rho_{\lambda_p}(r_p^2) \le\frac{C_M}{\sqrt p}.
\end{align*}
The original and tilted densities satisfy
\begin{align*}
\rho_Z(r_p^2) =\exp\{\psi(\lambda_p)-\lambda_pr_p^2\} \rho_{\lambda_p}(r_p^2).
\end{align*}
Furthermore,
\begin{align*}
0\le \lambda_pr_p^2-\psi(\lambda_p) =\int_0^{\lambda_p}u\psi''(u)\,du \le Cp\lambda_p^2\le C_M.
\end{align*}
Thus the exponential factor is bounded above and below by positive constants, which proves \eqref{eq:typical_gaussian_radius_density} and the lemma.
\end{proof}

\bibliography{ref}

\end{document}